\newtheorem{proposition}{Proposition}
\newtheorem{theorem}{Theorem}
\title{Equilibrium Policy Generalization: A Reinforcement Learning Framework for Cross-Graph Zero-Shot Generalization in Pursuit-Evasion Games}
\author{
    \textbf{Runyu Lu}$^{1,2}$,
    \textbf{Peng Zhang}$^{3}$,
    \textbf{Ruochuan Shi}$^{2,1}$,
    \textbf{Yuanheng Zhu}$^{2,1}$,\\
    \textbf{Dongbin Zhao}$^{2,1}$,
    \textbf{Yang Liu}$^{3}$,
    \textbf{Dong Wang}$^{3}$,
    \textbf{Cesare Alippi}$^{4,5}$\\
    $^{1}$School of Artificial Intelligence, University of Chinese Academy of Sciences\thanks{This work was supported in part by the National Natural Science Foundation of China under Grants 62293541, 62293542, and 62476044; in part by the Beijing Natural Science Foundation under Grant 4232056; in part by the Beijing Nova Program under Grant 20240484514; and in part by the International Partnership Program of Chinese Academy of Sciences under Grant 104GJHZ2022013GC.}\\
    $^{2}$Institute of Automation, Chinese Academy of Sciences 
    $^{3}$Dalian University of Technology \\
    $^{4}$Università della Svizzera italiana 
    $^{5}$Politecnico di Milano \\
    \texttt{lurunyu17@mails.ucas.ac.cn} \\
    \texttt{njzhangpeng@mail.dlut.edu.cn} \\
    \texttt{\{shiruochuan2025,yuanheng.zhu,dongbin.zhao\}@ia.ac.cn} \\
    \texttt{\{ly,wdice\}@dlut.edu.cn} \\
    \texttt{cesare.alippi@usi.ch}
}
\begin{document}

\maketitle

\begin{abstract}
Equilibrium learning in adversarial games is an important topic widely examined in the fields of game theory and reinforcement learning (RL). Pursuit-evasion game (PEG), as an important class of real-world games from the fields of robotics and security, requires exponential time to be accurately solved. When the underlying graph structure varies, even the state-of-the-art RL methods require recomputation or at least fine-tuning, which can be time-consuming and impair real-time applicability. This paper proposes an Equilibrium Policy Generalization (EPG) framework to effectively learn a generalized policy with robust cross-graph zero-shot performance. In the context of PEGs, our framework is generally applicable to both pursuer and evader sides in both no-exit and multi-exit scenarios. These two generalizability properties, to our knowledge, are the first to appear in this domain. The core idea of the EPG framework is to train an RL policy across different graph structures against the equilibrium policy for each single graph. To construct an equilibrium oracle for single-graph policies, we present a dynamic programming (DP) algorithm that provably generates pure-strategy Nash equilibrium with near-optimal time complexity. To guarantee scalability with respect to pursuer number, we further extend DP and RL by designing a grouping mechanism and a sequence model for joint policy decomposition, respectively. Experimental results show that, using equilibrium guidance and a distance feature proposed for cross-graph PEG training, the EPG framework guarantees desirable zero-shot performance in various unseen real-world graphs. Besides, when trained under an equilibrium heuristic proposed for the graphs with exits, our generalized pursuer policy can even match the performance of the fine-tuned policies from the state-of-the-art PEG methods.
\end{abstract}

\section{Introduction}

Real-world environments are variable, and the dynamics for real-world games can be time-evolving. As an important class of real-world games, pursuit-evasion games (PEGs) can model a variety of real-world problems (e.g., in robotics and security domains \cite{vidal2001pursuit,vidal2002probabilistic,chung2011search}). Typically with a team of pursuers and an adversarial evader, PEGs can be formulated with complex graph structures, where nodes and edges can be temporarily removed or added as the game proceeds. An intelligent game agent should be robust to such changes. In other words, an ideal pursuer or evader policy should be robust to different map or graph inputs. Classical differential game \cite{margellos2011hamilton,zhou2012general} and dynamic programming \cite{vieira2008optimal,horak2017dynamic} methods, while accurate, require recomputation under potential changes to game dynamics. In view of this gap, recent works on graph-based PEGs \cite{li2024grasper,li2023solving,xue2022nsgzero,xue2021solving} employ deep reinforcement learning (RL) to obtain a more flexible policy. Among the state-of-the-art methods, Grasper (see \cite{li2024grasper}) can pretrain a generalized multi-agent pursuer policy robust to different initial conditions (e.g., exit positions). Given an actual game situation, subsequent fine-tuning through game-theoretic approaches like PSRO \cite{lanctot2017unified} can be used to improve the policy without recomputing from scratch.

However, the state-of-the-art methods still exhibit two practical limitations. First, real-time applicability remains questionable due to the requirement of computationally intensive policy fine-tuning. Recent work \cite{zhuang2025solving} points out that existing methods may struggle to adapt to rapid changes in urban settings (e.g., traffic jams, emergencies, and unexpected social events). To make it worse, when the graph structure significantly changes, the pretrained policy will lose the guarantee to be a good starting point for subsequent tuning. Second, existing training processes heavily rely on certain behavior patterns of the opponent policy, which impairs the robustness of the methods. For example, Grasper requires the evader to consistently follow the shortest path to the chosen exits. As a result, the method is not applicable to no-exit PEGs and not general enough for both sides of the players. Besides, such simplifications can make the RL policy more exploitable by an intelligent opponent.

For the first problem, we expect the trained policy to exhibit desirable \textit{zero-shot} performance across different graph structures. For the second problem, we expect the training process to be generally applicable and take strong adversaries into account. Therefore, we ask the following question:

\textit{Is there a general RL framework that can train generalized PEG policies with robust zero-shot performance in unseen graph structures?}

This paper provides a positive answer to this question through the following major contributions:

\begin{itemize}

\item We propose Equilibrium Policy Generalization (EPG), a novel reinforcement learning framework that enables zero-shot generalization in Markov PEGs across different graph structures. For the first time, we show that reinforcement learning on a corpus of graphs with equilibrium policies serving as adversaries (and also guidance) can lead to a generalized pursuer (or evader) policy with robust zero-shot performance in unseen real-world graphs.

\item For no-exit graphs, we present a dynamic programming (DP) algorithm to efficiently construct an oracle that generates the single-graph equilibrium policies in Markov PEGs. We prove that the proposed algorithm can compute pure-strategy Nash equilibrium under a near-optimal $\tilde{\mathcal{O}}(\left|S\right|)$ time complexity. We further extend this algorithm with a grouping mechanism for scalability with respect to pursuer number.

\item We design a decentralized network architecture with a novel shortest path distance feature as state inputs to represent cross-graph multi-agent PEG policies. Through ablation studies, we verify that the distance feature, along with our reinforcement learning loss, plays an important role in cross-graph EPG training. Source code can be found in our supplementary material (with DP implementation and test files) and at https://github.com/Cahemgco/EPG\_code.

\item For graphs with exits, we provide a heuristic approach based on bipartite graph matching as an effective substitute for an exact equilibrium oracle. Directly utilizing this heuristic and approximate equilibrium oracle, we verify that the EPG framework still guarantees a strong zero-shot performance that matches or outperforms the fine-tuned results from the state-of-the-art methods like Grasper \cite{li2024grasper}.

\end{itemize}

\section{Preliminaries}

\subsection{Markov Game and Nash Equilibrium}

To formulate PEG solving, we introduce two-player zero-sum Markov games and Nash equilibrium.

\textbf{Two-player zero-sum Markov game.}\ \ \ An infinite-horizon two-player zero-sum Markov game is represented by a tuple $\left( S,\mathcal{A},\mathcal{B},\mathcal{P},r,\gamma  \right)$, where $S$ is the state space, $\mathcal{A}$ is the action space of the max-player (i.e., the team of pursuers in a PEG) who aims to maximize the cumulative reward, $\mathcal{B}$ is the action space of the min-player (i.e., the evader in a PEG) who aims to minimize the cumulative reward, $\mathcal{P}\in {{\left[ 0,1 \right]}^{\left| S \right|\left| \mathcal{A} \right|\left| \mathcal{B} \right|\times \left| S \right|}}$ is the transition probability matrix, $r\in {{\left[ 0,1 \right]}^{\left| S \right|\left| \mathcal{A} \right|\left| \mathcal{B} \right|}}$ is the reward vector, and $\gamma \in \left( 0,1 \right)$ is the discount factor.

\textbf{Policy and value function.}\ \ \ Following common notations, we denote by $(\mu,\nu)$ the joint policy of the two players, where $\mu$ is the policy of the max-player (pursuers) and $\nu$ is the policy of the min-player (evader). $\mu(s)\in\Delta(\mathcal{A})$ (resp., $\nu(s)\in\Delta(\mathcal{B})$) is the max-player's (resp., min-player's) action distribution at state $s\in S$, and $\mu(s,a)$ (resp., $\nu(s,b)$) is the probability of selecting action $a\in\mathcal{A}$ (resp., $b\in\mathcal{B}$). Define value functions ${{V}^{\mu,\nu }}(s)=\mathbb{E}\left[ \sum\nolimits_{t=0}^{\infty }{{{\gamma }^{t}}r({{s}_{t}},{{a}_{t}},{{b}_{t}})}\left| {{s}_{0}}=s;\mu,\nu  \right. \right]$ and ${{Q}^{\mu,\nu }}(s,a,b)=\mathbb{E}\left[ \sum\nolimits_{t=0}^{\infty }{{{\gamma }^{t}}r({{s}_{t}},{{a}_{t}},{{b}_{t}})}\left| {{s}_{0}}=s,{{a}_{0}}=a,{{b}_{0}}=b;\mu,\nu  \right. \right]$ as in single-agent MDPs.

\textbf{Nash equilibrium.}\ \ \ A Nash equilibrium (NE) in a game is a joint policy where each individual player cannot benefit from unilaterally deviating from his/her own policy. Specifically, in a two-player zero-sum MG, an NE $(\mu^*,\nu^*)$ satisfies ${{V}^{\mu ,{{\nu }^{*}}}}\le {{V}^{{{\mu }^{*}},{{\nu }^{*}}}}\le {{V}^{{{\mu }^{*}},\nu }}$ for any $\mu$ and $\nu$ at all states. As is well known, every MG with finite states and actions has at least one NE, and all NEs in a two-player zero-sum MG share the same Nash value ${{V}^{*}}(s)={{V}^{{{\mu }^{*}},{{\nu }^{*}}}}(s)={{\max }_{\mu }}{{\min }_{\nu }}{{V}^{\mu ,\nu }}(s)={{\min }_{\nu }}{{\max }_{\mu }}{{V}^{\mu ,\nu }}(s)$ \cite{shapley1953stochastic}. In two-player zero-sum games, Nash equilibrium can be viewed as the optimal joint policy since both players cannot be exploited by their worst-case opponents. Besides, if $(\mu_1,\nu_1)$ and $(\mu_2,\nu_2)$ are both NEs, then $(\mu_1,\nu_2)$ and $(\mu_2,\nu_1)$ are NEs as well \cite{roughgarden2016twenty}. To guarantee the optimality, it suffices to find an equilibrium policy for each player.

\subsection{Graph-Based Pursuit-Evasion Game}

In the context of pursuit-evasion games (PEGs), the transition is usually considered as deterministic, with $\mathcal{P}\in {{\left\{ 0,1 \right\}}^{\left| S \right|\left| \mathcal{A} \right|\left| \mathcal{B} \right|\times \left| S \right|}}$. As we have mentioned, the max-player is the team of multi-agent pursuers, and the min-player is the single-agent evader. A general target in PEGs is to approximate the Nash equilibrium under the formulation of two-player zero-sum games.

To formulate large-scale problems, a PEG can be represented with a graph $G=\left\langle \mathcal{V},E \right\rangle$, where $\mathcal{V}$ is the set of nodes, and $E$ is the set of undirected edges (which could indicate streets in an urban scenario). When there are $m$ pursuers, the state $s$ can be represented by the locations of all $m+1$ agents, i.e., $s=(s_p,s_e)$, where ${{s}_{p}}=(v_{p}^{1},v_{p}^{2},\cdots ,v_{p}^{m})\in {{\mathcal{V}}^{m}}$, and ${{s}_{e}}={{v}_{e}}\in \mathcal{V}$. The valid actions for each agent are to move to any node within the current node's neighborhood (including itself). To describe the states where the pursuit has been successful, define a termination function $f:{{\mathcal{V}}^{m}}\times \mathcal{V}\to \left\{ 0,1 \right\}$. When $f(s_p,s_e)=1$, the game is terminated, and a reward of $+1$ is received. Note that the discount factor $\gamma<1$ encourages the pursuers to capture the evader as soon as possible in the no-exit scenario. If there are exits in the graph, then another termination function $g:\mathcal{V}\to \left\{ 0,1 \right\}$ should be defined. When $g(s_e)=1$, the game is terminated as well, but a reward of $-1$ will be received.

\section{Equilibrium Policy Generalization}

As we have mentioned, existing PEG methods can have difficulty adapting to the changes of graph structures or extending to different scenarios. Here, we propose a general reinforcement learning framework called Equilibrium Policy Generalization (EPG) that facilitates cross-graph zero-shot generalization for diverse PEG settings. The overall training pipeline is shown in Figure \ref{pipeline}.

Intuitively, a training set (right) consisting of a variety of graphs with different structures is first generated. For each graph $G_i$, an equilibrium policy $(\mu^*_i,\nu^*_i)$ is derived from an equilibrium oracle. Then, reinforcement learning (left) is conducted to learn a generalized policy, with the dynamics constructed through each graph $G$ and opponent $\nu^*$ in the training set. Section \ref{3.1} introduces a detailed construction of the reinforcement learning process established upon soft actor-critic (SAC). Section \ref{3.2} presents a theoretically sound equilibrium oracle based on dynamic programming (DP). Section \ref{3.3} designs a multi-agent pursuer policy representation specified for cross-graph PEG tasks.

\subsection{Reinforcement Learning with Equilibrium Adversary}

\label{3.1}

\begin{figure*}[t]
    \centering
    \includegraphics[width=0.9\textwidth]{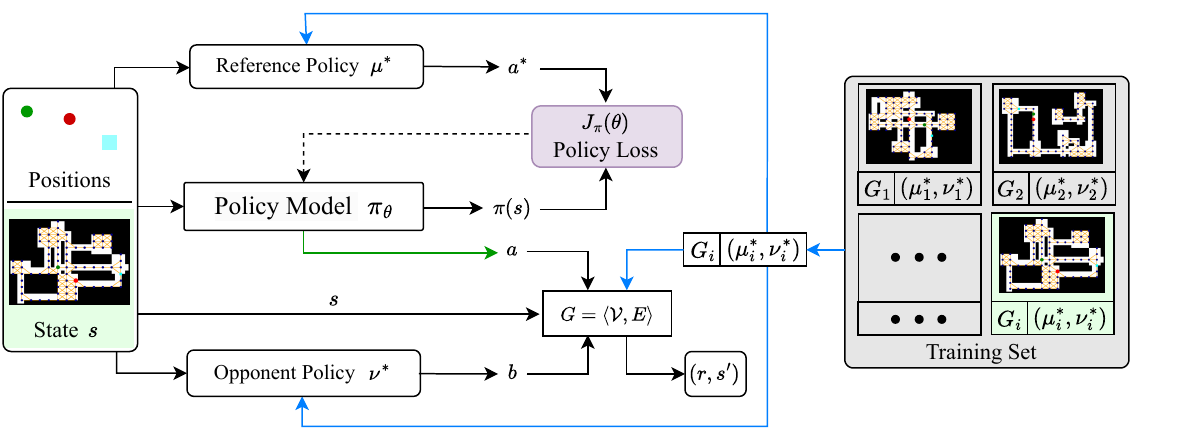}
    \caption{Training pipeline of Equilibrium Policy Generalization}
    \label{pipeline}
\end{figure*}

SAC \cite{haarnoja2018soft,christodoulou2019soft} is a well-known off-policy reinforcement learning framework that maximizes expected return regularized under a policy entropy term. Specifically, the value function in discrete-action SAC \cite{christodoulou2019soft} is defined as $V(s)={{\mathbb{E}}_{a\sim\pi (s)}}\left[ Q(s,a)-\alpha \log \pi (s,a) \right]$. The losses of the value network $Q_\phi$ and the policy network $\pi_\theta$ are computed as ${{J}_{Q}}(\phi )={{\mathbb{E}}_{s,a}}\left[ \frac{1}{2}{{\left( {{Q}_{\phi }}(s,a)-\left( r+\gamma {{\mathbb{E}}_{{{s}^{\prime }}}}\left[ V({{s}^{\prime }}) \right] \right) \right)}^{2}} \right]$ and ${{J}_{\pi }}(\theta )={{\mathbb{E}}_{s,a\sim{{\pi }_{\theta }}(s)}}\left[ \alpha \log {{\pi }_{\theta }}(s,a)-Q(s,a) \right]$, respectively. Besides, the temperature $\alpha$ under target entropy $\overline{H}$ is adaptively updated under loss $J(\alpha )={{\mathbb{E}}_{s,a}}\left[ -\alpha \left( \log \pi (s,a)+\overline{H} \right) \right]$. Here, we use SAC as the foundation algorithm to introduce the RL pipeline in Equilibrium Policy Generalization.

In order to train a generalized policy, our reinforcement learning process goes through the different PEG graphs in the training set. The major difference between our method and the common multi-task learning methods is that we use the equilibrium $\nu^*$ as the opponent policy to construct an adversarial environment for each graph. As is shown in Figure \ref{pipeline}, the equilibrium opponent offers an action $b$ that directly serves to generate the transition $(s,a,b,r,s^\prime)$ in the current PEG graph $G$. To our knowledge, this is a novel approach not yet considered in the game RL domain. Because $\nu^*$ is hardly exploitable by any strategy, reinforcement learning against such a policy avoids overly exploiting a specific behavior pattern and makes the training process generally focus on cross-graph policy robustness. Since a truly robust policy should not perform poorly against the corresponding equilibrium opponent in any of the graphs, we can use this necessary condition to exclude the weak strategies from the entire policy space. Cross-graph reinforcement learning can be regarded as an efficient way of fulfilling such exclusions. When the training set is diverse enough, only a small area in the parameter space will be left after RL training, where we expect to derive a policy with robust zero-shot generalization.

However, since the equilibrium opponent is hardly exploited and the PEG is a sparse reward environment, the original SAC algorithm can suffer from inefficient exploration even under a single graph. To deal with this problem, the equilibrium $\mu^*$ can also be used as a policy guidance for the training process. Specifically, we regard $\mu^*$ as a reference policy and append an additional KL-divergence term ${{D}_{\text{KL}}}\left( {{\mu }^{*}},\pi  \right)$ to the original policy loss. When $\mu^*$ is a pure strategy, it holds that ${{D}_{\text{KL}}}\left( {{\mu }^{*}}(s),\pi (s) \right)=-\log \pi (s,{{a}^{*}})$, where $a^*$ is the deterministic action of $\mu^*(s)$. In this case, the divergence-regularized policy loss can be computed as ${{J}_{\pi }}(\theta )=\nonumber{{\mathbb{E}}_{s,a\sim{{\pi }_{\theta }}(s)}}\left[-\beta \log {{\pi }_{\theta }}(s,{{a}^{*}})+\alpha \log {{\pi }_{\theta }}(s,a)-Q(s,a) \right]$, where $\beta$ is a hyperparameter that keeps a balance between policy guidance and reinforcement learning.

As is shown in Figure \ref{pipeline}, a global state $s$ is sampled under a randomly selected graph $G$ from the training set. The reference policy $\mu^*$ and the model $\pi_\theta$ generate a deterministic action $a^*$ and a strategy $\pi(s)$, respectively. The divergence-regularized policy loss $J_\pi(\theta)$ is then computed, and $\pi_\theta$ is updated through gradient descent. For transition generation, a joint action $(a,b)$ under $s$ is sampled and sent into the graph to compute the reward $r$ and the subsequent state $s^\prime$. In the training phase, $a$ is sampled randomly, and $(r,s^\prime)$ is used to train the value network $Q_\phi$ under $J_Q(\phi)$. In the testing phase, $a$ is sampled from the learned policy $\pi_\theta$, and $(r,s^\prime)$ is used for trajectory generation and evaluation.

\subsection{Single-Graph Equilibrium Oracle Construction}

\label{3.2}

As our EPG framework requires the equilibrium policy $(\mu^*,\nu^*)$ for each graph $G$ in the training set, equilibrium oracles should be provided to guarantee applicability in PEGs. In this section, we present effective methods to construct accurate or approximate equilibrium oracles for different scenarios.

\subsubsection{Dynamic Programming for No-Exit Markov PEGs}

Determining whether the pursuers can always capture the evader in a no-exit PEG is proved to be an \textbf{EXPTIME-complete} problem (see \cite{goldstein1995complexity}). This result suggests that an accurate equilibrium oracle could incur a time complexity of $\Omega(|S|)$, where $|S|$ is the number of all game states, which is exponential in the agent number $m+1$. For sequential PEGs under no-exit graphs, \cite{vieira2008optimal} proposes an efficient method to compute the minimum steps for the pursuers to capture the evader. The idea is to iteratively expand the set of states where the pursuit is guaranteed to be successful under the optimal strategy. During the $\mathcal{O}(|S|)$ state expansion, the equilibrium policy is also generated. For Markov games, however, the method is not applicable since the moves for both players are simultaneous.

On the other hand, the classical marking algorithm (see \cite{chung2011search}) can have a worst-case time complexity of $\mathcal{O}(|S|^2)$ in practical Markov PEGs. Even when the pursuer number is small (e.g., two or three), the algorithm cannot scale with the number of nodes in the graph, which can be large in the real world. In view of this gap, we first show how to apply the idea of state expansion to simultaneous games and provide a more efficient dynamic programming (DP) algorithm for Markov PEGs.

\begin{algorithm}[ht]
\caption{Dynamic programming (DP) for Markov PEGs in no-exit graphs}
\label{Algorithm}
\KwIn{graph $G=\left\langle \mathcal{V},E \right\rangle$, pursuer number $m$, and termination function $f:{{\mathcal{V}}^{m}}\times \mathcal{V}\to \left\{ 0,1 \right\}$}
Initialize an empty queue $\mathcal{Q}$ and an array $D=\infty$\\
\For {pursuer state (positions) ${{s}_{p}}\in {{\mathcal{V}}^{m}}$}{
    \For {evader state ${{s}_{e}}\in \mathcal{V}$}{
        \If {$f(s_p,s_e)=1$}{
            $D(s_p,s_e)\gets0$\\
            Push $(s_p,s_e)$ into $\mathcal{Q}$
        }
    }
}
\While {$\mathcal{Q}$ is not empty}{
    Pop the first element $(s_p,s_e)$ from $\mathcal{Q}$\\
    \For {evader neighbor $n_e\in\mathrm{Neighbor}(s_e),\nexists n_e^\prime\in \mathcal{V},(n_e,n_e^\prime)\in E,D(s_p,n_e^\prime)>D(s_p,s_e)$}{
        \For {pursuer neighbor $n_p\in\mathrm{Neighbor}(s_p)\subset\mathcal{V}^m,D(n_p,n_e)=\infty$}{
            $D(n_p,n_e)\gets D(s_p,s_e)+1$\\
            Push $(n_p,n_e)$ into $\mathcal{Q}$
        }
    }
}
\For {global state $(s_p,s_e)\in \mathcal{V}^m\times\mathcal{V}$}{
    $\mu ({{s}_{p}},{{s}_{e}})\gets\underset{\text{neighbor }{{n}_{p}}\text{ of }{{s}_{p}}}{\mathop{\arg \min }}\,\left\{ \underset{\text{neighbor }{{n}_{e}}\text{ of }{{s}_{e}}}{\mathop{\max }}\,D({{n}_{p}},{{n}_{e}}) \right\}$\\
    $\nu ({{s}_{p}},{{s}_{e}})\gets\underset{\text{neighbor }{{n}_{e}}\text{ of }{{s}_{e}}}{\mathop{\arg \max }}\,\left\{ \underset{\text{neighbor }{{n}_{p}}\text{ of }{{s}_{p}}}{\mathop{\min }}\,D({{n}_{p}},{{n}_{e}}) \right\}$
}
\KwOut{pursuer policy $\mu :{{\mathcal{V}}^{m}}\times \mathcal{V}\to {{\mathcal{V}}^{m}}$ and evader policy $\nu :{{\mathcal{V}}^{m}}\times \mathcal{V}\to \mathcal{V}$}
\end{algorithm}

The proposed DP algorithm is shown in Algorithm \ref{Algorithm}. Intuitively, $D(s_p,s_e)$ is the minimum number of pursuit steps under the optimal pure strategy and is computed through a new form of state expansion (lines 10-18) using a queue $\mathcal{Q}$. Based on $D(\cdot)$, a joint policy $(\mu,\nu)$ is generated through minimax computation (lines 19-22). When a pure-strategy Nash equilibrium exists, the following theorem shows that the computed $D(\cdot)$ induces the Nash value, and the joint policy must be an exact Nash equilibrium. That is to say, the DP algorithm can generate the optimal pure strategies for both players.

\begin{theorem}
[Near-optimality of DP policy]
\label{T1}
If there exists a pure-strategy Nash equilibrium in the no-exit PEG, then the DP algorithm induces the Nash value ${{V}^{*}}(s=({{s}_{p}},{{s}_{e}}))={{\gamma}^{D({{s}_{p}},{{s}_{e}})}}$ and the corresponding Nash equilibrium $({{\mu }^{*}},{{\nu }^{*}})=(\mu ,\nu )$.
\end{theorem}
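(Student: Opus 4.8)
The plan is to argue in three stages: first reduce the Nash value to an optimal discounted capture time, then show that the array $D(\cdot)$ produced by the algorithm equals that optimal time, and finally verify that the extracted minimax and maximin policies constitute a stagewise saddle point and hence a Nash equilibrium.

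For the reduction, I would observe that the only reward is the $+1$ received at capture and that each elapsed step is discounted by $\gamma$; consequently any pair of pure strategies capturing the evader in exactly $T$ steps yields value $\gamma^T$ (and $0$ if capture never happens). Since $\gamma^T$ decreases strictly in $T$, maximizing value is equivalent to minimizing capture time for the pursuers and minimizing value is equivalent to maximizing it for the evader. The hypothesis that a pure-strategy equilibrium exists then says that at every state the one-stage matrix game on next-state values has a saddle point, so the upper value $\min_{n_p}\max_{n_e}$ and the lower value $\max_{n_e}\min_{n_p}$ agree; calling this common optimal capture time $D^*$, I record the target identity $V^*(s)=\gamma^{D^*(s)}$ together with the Shapley recursion $D^*(s_p,s_e)=1+\min_{n_p}\max_{n_e}D^*(n_p,n_e)$ on non-terminal states and $D^*=0$ on terminal ones.

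The heart of the proof is to show $D=D^*$, which I would establish by retrograde (backward) induction over the BFS layers of the queue $\mathcal{Q}$, proving simultaneously that states are dequeued in non-decreasing order of $D$-value and that a state is assigned $d+1$ precisely when $D^*=d+1$. For the upper bound, the guard in line 13---that no evader neighbor $n_e'$ of $n_e$ has $D(s_p,n_e')>d$---certifies that if the pursuers commit to the move into $s_p$, then every evader response from $n_e$ lands in a state already finalized with value at most $d$; hence the pursuers force capture within $d+1$ steps and $D^*(n_p,n_e)\le d+1$. For the matching lower bound, the test $D(n_p,n_e)=\infty$ in line 14, together with the non-decreasing processing order, rules out any earlier (smaller-value) predecessor having already closed off $(n_p,n_e)$, so the evader survives at least $d+1$ steps and $D^*(n_p,n_e)\ge d+1$. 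I expect this to be the \emph{main obstacle}: one must show that at the instant line 13 is evaluated, exactly those evader escape options with value at most $d$ have been finalized, so the guard is neither satisfied too early nor failed spuriously. This is where the simultaneous-move structure meets the sequential expansion, and where the pure-equilibrium hypothesis---ensuring that the minimax value the algorithm propagates coincides with the true game value---is indispensable.

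With $D=D^*$ secured, the equilibrium claim follows. The policy $\mu(s)=\arg\min_{n_p}\max_{n_e}D(n_p,n_e)$ attains the upper value and $\nu(s)=\arg\max_{n_e}\min_{n_p}D(n_p,n_e)$ attains the lower value; since the two agree, $(\mu(s),\nu(s))$ is a saddle point of the one-stage game whose payoffs are the continuation values $\gamma^{D^*(\cdot)}$. Substituting $V^*=\gamma^{D^*}$ into the Shapley operator confirms it is the fixed point and that $\mu$ is a stagewise best response to $\nu$ and vice versa; invoking the one-shot-deviation principle for discounted zero-sum Markov games then lifts this to the global inequalities $V^{\mu,\nu^*}\le V^{\mu^*,\nu^*}\le V^{\mu^*,\nu}$, so $(\mu,\nu)=(\mu^*,\nu^*)$ is a Nash equilibrium with value $\gamma^{D}$, as claimed.
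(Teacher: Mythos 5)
Your overall route coincides with the paper's: reduce the Nash value to a discounted optimal capture time via the simplified Bellman minimax equation (using the pure-equilibrium hypothesis to replace mixtures over $\mathcal{A}$ by pure actions, so $V^*(s)=\gamma\max_a\min_b V^*(\mathcal{P}(s,a,b))$), then prove $D=D^*$ by a layered induction tied to the non-decreasing pop order of $\mathcal{Q}$, and finally read off the saddle-point policies from the minimax/maximin extraction lines. Your soundness half is fine in sketch: the line-13 guard plus the inductive meaning of already-finalized values shows that moving into $s_p$ forces capture within $d+1$ steps, so $D^*(n_p,n_e)\le d+1$.

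The genuine gap is in the other direction, and you name it yourself without closing it. The inference ``the test $D(n_p,n_e)=\infty$, together with the non-decreasing processing order, rules out any earlier predecessor having closed off $(n_p,n_e)$, \emph{so the evader survives at least $d+1$ steps}'' does not follow: it presupposes \emph{completeness} --- that by the time layer $d$ is processed, every state with true value $\le d$ has already been enqueued with its correct value, and moreover that the guard in line 13 actually fires for the states that need it. This is exactly what you defer as the ``main obstacle,'' so the induction never closes: a state with finite $D^*$ could in principle be missed entirely (the guard failing spuriously), which would wrongly certify evader survival. The paper fills this hole constructively: for $s'=(n_p,n_e)$ with $V^*(s')=\gamma^k$, the simplified Bellman equation supplies an optimal pursuer action $a=s_p$ and an evader best response $b=s_e$ with $V^*(s_p,s_e)=\gamma^{k-1}$, hence $D(s_p,s_e)=k-1$ by induction; the best-response inequality $V^*(\mathcal{P}(s',a,b'))\ge\gamma^{k-1}$ for all $b'$ translates, again by induction, into $D(s_p,n_e')\le k-1=D(s_p,s_e)$ for every neighbor $n_e'$ of $n_e$ --- which is precisely the line-13 guard, so $s'$ \emph{is} enumerated when $(s_p,s_e)$ is popped. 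Two further contradiction arguments pin the assigned value exactly: pop order excludes $k<D(s')<\infty$, and the soundness half of the induction hypothesis excludes $D(s')<k$; symmetrically, in the soundness direction a hypothetical strictly better pursuer action $a^\dagger$ (with the evader's best response $b^\dagger$ to it) would force $s'$ to be enumerated from an earlier-popped state, a contradiction that is also needed to certify the evader's guarantee $D^*\ge d+1$. Without these three arguments --- the Bellman-based predecessor construction verifying the guard, and the two order-of-popping contradictions --- your simultaneous induction is a statement of what must be true rather than a proof of it.
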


The detailed proof of Theorem \ref{T1} is based on mathematical induction and reserved in Appendix \ref{A.1}. Actually, Algorithm \ref{Algorithm} efficiently solves the Bellman minimax equation (see \cite{zhu2020online}) under the existence of a pure-strategy Nash equilibrium. Note that the update condition $D(s_p,s_e)=\infty$ guarantees that every state $(s_p,s_e)$ is pushed into and popped from $\mathcal{Q}$ at most once. Therefore, the time complexity of Algorithm \ref{Algorithm} is a near-optimal $\tilde{\mathcal{O}}(\left|S\right|)$, where $\tilde{\mathcal{O}}$ hides the logarithm factors required for picking each $(n_p,n_e)$ through preserving data structures like balanced trees. Also note that the computation of $\mu(s)$ and $\nu(s)$ can be reserved online and thus does not affect the overall time complexity.

Algorithm \ref{Algorithm} suggests that the Nash value admits an efficient estimation when the PEG only admits a one-sided termination function. Compared to direct value iteration (see \cite{horak2017dynamic}), the DP algorithm can exactly compute the equilibrium policy within finite iterations. The existence of pure-strategy Nash equilibrium implies that a successful pursuit is guaranteed for all $s\in S$ (i.e., $D(s)<\infty$). For example, the optimal strategy in any tree-form $G$ is a pure-strategy Nash equilibrium if we regard adjacency as the condition of a successful pursuit. Even when this assumption does not globally hold in a PEG, Algorithm \ref{Algorithm} still induces a near-optimal pursuit strategy for the states with finite $D(\cdot)$.

\subsubsection{Heuristic Approach for Multi-Exit PEGs and Grouping Extension of DP Approach}

For PEGs with exits, it can be more difficult to design a DP-like equilibrium oracle with rigorous guarantees due to the existence of the other termination function $g$. Nevertheless, we observe that the cooperative behaviors among pursuers can be approximately abstracted as one-to-one exit allocation. Based on this observation, we provide an equilibrium heuristic featuring bipartite graph matching to approximately generate equilibrium policies. The detailed construction is reserved in Appendix \ref{B} due to space limitations. Compared to the DP approach in no-exit scenarios, the heuristic approach computes the current policy for each state independently and can be directly executed during RL training, with a time complexity polynomial in the pursuer number for any current state. Note that the polynomial time guarantee also implies its scalability with respect to agent number.

In order to facilitate the scalability of the DP approach in no-exit graphs with many pursuers, here we further extend DP with a grouping mechanism to trade optimality for applicability. Note that the DP algorithm is directly applicable when there are two or three pursuers. For a large pursuer number $m$, we can express it as the summation of twos and threes. For example, if there are $m=6$ pursuers, we have that $6=2+2+2$ and can thus group them into three sub-teams, each with two pursuers. For the pursuer side, we can simply use the exact $2$-pursuer policies under an arbitrary grouping result to construct a $6$-pursuer policy, which is empirically strong due to the optimality of the DP algorithm.

For the evader side, we expect that it should not be easily exploited by the grouping-based pursuers. However, it does not know the exact result of grouping. Therefore, we follow minimax criteria to construct the evader policy at any current state $s$. We use $s_g=(s^1_g=(s^1_p,s_e),\cdots,s^k_g=(s^k_p,s_e))\in S_{g}$ to denote the in-team states for the global state $s$ under any possible grouping result with $k$ sub-teams. Then, we compute $s_*=\arg {{\min }_{{{s}_g}\in {{S}_{g}}}}\left\{ \max _{i=1}^{k}D({{s}^{i}_g}) \right\}$, where $D(s^i_g)=D(s^i_p,s_e)$ is the computed result from the DP algorithm. The evader policy is to follow the DP policy $\nu^*(s_*^j)$, where $j=\arg \max _{i=1}^{k}D(s_{*}^{i})$. Intuitively, the DP-based evader always considers the worst-case grouping and ensures that it is hard for at least one sub-team to capture it in this case. While introducing an extra online time complexity, the grouping mechanism avoids the exponentially growing complexity of running Algorithm \ref{Algorithm} and makes the DP approach applicable to the scenarios with many pursuers.

\subsection{Cross-Graph Representation of PEG Policy}

\label{3.3}

To represent a generalized PEG policy, the network architecture should be capable of encoding the meaningful state information across different graph structures. Besides, since the policy should be applicable to the team of homogeneous pursuers, it is appealing to design a decentralized architecture with shared parameters. Such a network architecture can be robust to the change of agent number and allow for decentralized execution when necessary. Under the principle of sequential decision-making, a joint policy can be decomposed as $\pi({{a}_{1}},{{a}_{2}},\cdots ,{{a}_{m}}|s)=\prod\nolimits_{l=1}^{m}{\pi ({{a}_{l}}|s,{{a}_{1}},\cdots ,{{a}_{l-1}})}$, where $(s,a_1,\cdots,a_{l-1})$ indicates the global state after the first $l-1<m$ pursuers take actions $(a_i)_{i\in[l-1]}$.

With the above-mentioned considerations, we present a sequence model with an attention-based architecture (see Figure \ref{policy}) to represent the cross-graph joint policy. For a team of pursuers with $m$ agents, the sequence model queries the policy network $m$ times under a fixed adjacent matrix $M\in\{0,1\}^{n\times n}$ ($n=\left|\mathcal{V}\right|$) for the current graph. The input is composed of a state feature $s_f$ that describes the current global state and the information of node index $c$ for the current acting agent. To uniformly capture the state information across different graphs, we use the \textbf{shortest path distances} to the $m+1$ agents (including the evader) and the exits (if existing) as the initial feature of each node $v\in\mathcal{V}$. The state feature $s_f$ is composed of the normalized features of all $n$ nodes. The shortest path is an important and well-examined concept in graph theory \cite{bondy2008graph}, and the distance between any two nodes can be preprocessed using the $\mathcal{O}(n^3)$ Floyd algorithm. In Appendix \ref{A.2}, we further prove that the input $(s_f,c)$ is dense enough to identify the current global state in sequential decision-making.

\begin{figure*}
    \centering
    \includegraphics[width=0.9\textwidth]{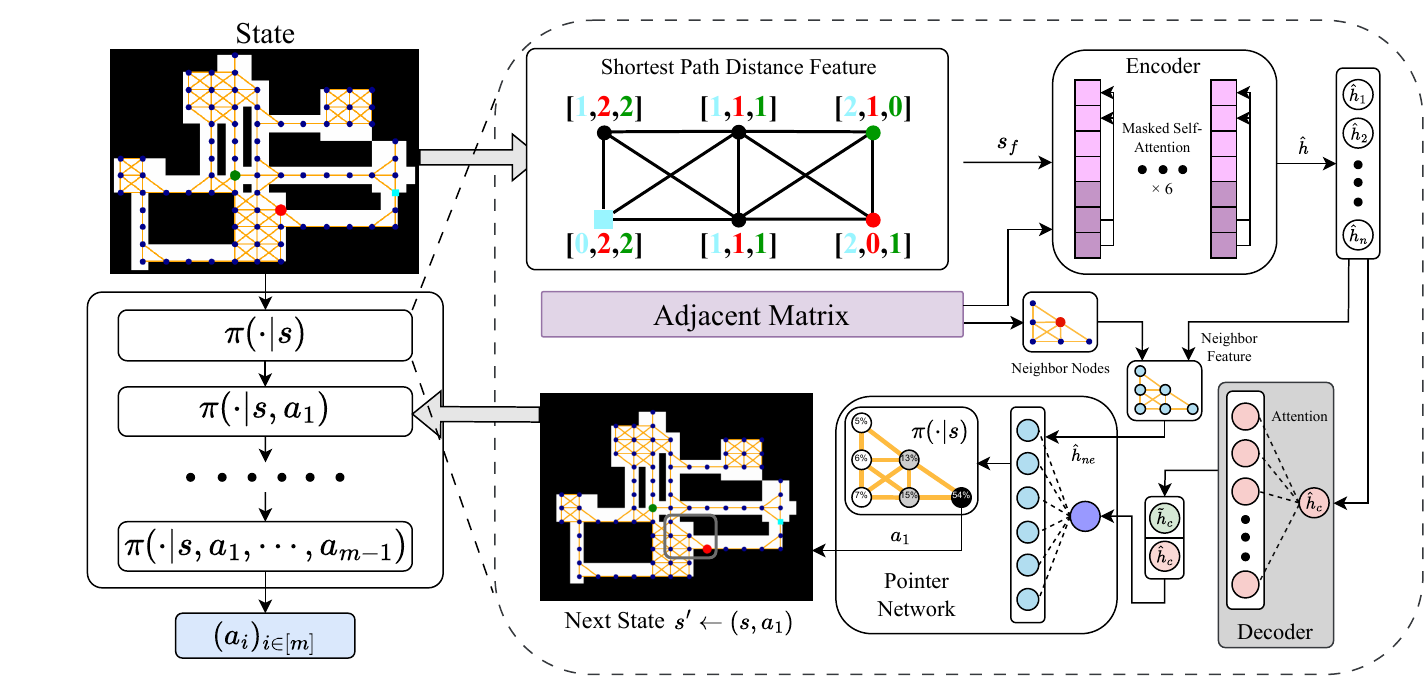}
    \caption{Sequence model with cross-graph policy representation}
    \label{policy}
\end{figure*}

Given the state feature input $s_f$, we borrow the ideas from the existing works on robot exploration \cite{cao2023ariadne,liu2025two} to construct a policy network (on the right of Figure \ref{policy}). We first embed $s_f$ into $\mathbb{R}^{d\times n}$ and send it into an encoder composed of multiple self-attention \cite{vaswani2017attention} layers, where $d$ is the embedding dimension. Each layer takes the output $h$ of the last layer as the input and outputs $h^\prime$ using a masked attention: ${{q}_{i}}={{W}_{Q}}{{h}_{i}},{{k}_{i}}={{W}_{K}}{{h}_{i}},\ {{v}_{i}}={{W}_{V}}{{h}_{i}},{{u}_{ij}}=\frac{q_{i}^{T}{{k}_{j}}}{\sqrt{d}},{{w}_{ij}}=\frac{{{e}^{{{u}_{ij}}}}}{\sum\nolimits_{t=1}^{n}{{{e}^{{{u}_{it}}}}}},h_{i}^{\prime }=\sum\limits_{j=1}^{n}{\min \left\{ {{w}_{ij}},{{M}_{ij}} \right\}{{v}_{j}}}$, where $W_Q,W_K,W_V\in\mathbb{R}^{d\times d}$ are the weights to be learned. Theoretically, it requires $\textbf{Diameter}(G)$ layers to globally broadcast the local information of a node since $M$ is the adjacent matrix. However, in contrast to using a low-level node feature with indicators (see Grasper \cite{li2024grasper}), using a high-level distance feature allows the node itself to encode long-term information. With the shortest path distance feature, we can use a fixed number of attention layers ($6$ in practice) for information transmission.

Denote by $\hat{h}$ the output of the encoder, and recall that $c$ is the node index corresponding to the current acting agent. We further use a decoder without masking to gather global information. Specifically, the decoder uses $\hat{h}_c$ to query in the output features $\hat{h}$ of all nodes, with the keys equal to the values: $q={{W}_{Q}}{{{\hat{h}}}_{c}},{{k}_{i}}={{W}_{K}}{{{\hat{h}}}_{i}},{{v}_{i}}={{W}_{V}}{{{\hat{h}}}_{i}},{{u}_{j}}=\frac{{{q}^{T}}{{k}_{j}}}{\sqrt{d}},{{w}_{j}}=\frac{{{e}^{{{u}_{j}}}}}{\sum\nolimits_{t=1}^{n}{{{e}^{{{u}_{t}}}}}},{\tilde{h}_c}=\sum\nolimits_{j=1}^{n}{{{w}_{j}}{{v}_{j}}}$. The decoder output $\tilde{h}_c$ is further concatenated with $\hat{h}_c$ and projected into $\mathbb{R}^d$. Then, it is used as a query for a pointer network \cite{vinyals2015pointer}, which takes the features of the neighbor nodes $\hat{h}_{ne}$ for the current agent as the keys and values. The pointer network directly outputs the attention vector $w$ as the current policy (i.e., $\pi(a|s)=w_a$) since the number of the neighbors aligns with the number of the valid actions.

After the first query through the policy network, an action $a_1$ for the first agent is sampled from $\pi(a|s)$, and the state is updated as $s^\prime=(s,a_1)$. The subsequent queries follow the same process described above. Under the decomposition of sequential decision-making, while the process generates the joint action $(a_i)_{i\in[m]}$ sequentially, it is equivalent to a direct sampling from the joint policy. Note that querying the policy model is practically efficient, especially with the help of GPUs. When the graph structure changes, we simply rerun the Floyd algorithm, instead of the $\tilde{\mathcal{O}}(n^{m+1})$ DP algorithm.

\section{Experiments}

\label{4}

In this section, we verify that our EPG framework can train a generalized policy with robust zero-shot performance under unseen graph structures. Using $76$ procedurally generated maps in the Dungeon environment \cite{chen2019self}, we construct a heterogeneous training set by discretizing each map into small-scale (100-node) and large-scale (500-node) graphs. The policy training follows the cross-graph reinforcement learning pipeline in Section \ref{3.1}. In no-exit scenarios with $2$ or $3$ pursuers, we directly use the DP algorithm (Algorithm \ref{Algorithm}) in Section \ref{3.2} as an accurate equilibrium oracle. When there are more pursuers, we use the grouping extension of the DP approach to construct an approximate equilibrium oracle. In multi-exit scenarios, we use the heuristic approach to construct the approximate oracle. Besides, we use the sequence policy model in Section \ref{3.3} to represent either pursuer or evader policy. For the evader side, the policy model is reduced to a single policy network, with $\mu^*$ and $\nu^*$ exchanged in Figure \ref{pipeline}. The training time costs are reported in Appendix \ref{C.1}. We test the zero-shot performance of the learned pursuer or evader policy under unseen graphs without further fine-tuning.

\subsection{Performance Tests in Real-World No-Exit Graphs}

\label{4.1}

\begin{table*}[t]
\caption{Performance of RL pursuer / evader against DP oracle in no-exit PEGs with 2 pursuers}
\begin{center}
\begin{tabular}{c|c|c|c|c|c}
\multirow{2}{*}{Graph Structure} & \multicolumn{3}{c|}{Pursuit Success Rate \text{↑}} & \multicolumn{2}{c}{Evasion Timestep \text{↑}}\\\cline{2-6}
& DP - DP & \textbf{$\text{RL}_p$} - DP & SPS - DP & DP - DP & DP - \textbf{$\text{RL}_e$}\\\hline\hline
Grid Map & $1.00$ & $1.00$ & $1.00$ & $12.29 \pm 2.06$ & $11.88 \pm 2.39$\\\hline
Scotland-Yard Map & $1.00$ & $0.99$ & $0.17$ & $15.13 \pm 2.77$ & $12.57 \pm 2.96$\\\hline
Downtown Map & $1.00$ & $0.99$ & $0.17$ & $14.22 \pm 3.27$ & $11.83 \pm 3.12$\\\hline
Times Square & $1.00$ & $0.98$ & $0.14$ & $16.47 \pm 3.23$ & $14.68 \pm 3.11$\\\hline
Hollywood Walk of Fame & $1.00$ & $0.62$ & $0.02$ & $25.56 \pm 5.03$ & $20.00 \pm 4.99$\\\hline
Sagrada Familia & $1.00$ & $0.66$ & $0.04$ & $21.88 \pm 4.82$ & $17.89 \pm 4.59$\\\hline
The Bund & $1.00$ & $0.60$ & $0.13$ & $25.26 \pm 6.17$ & $20.59 \pm 5.59$\\\hline
Eiffel Tower & $1.00$ & $0.97$ & $0.81$ & $23.42 \pm 6.48$ & $18.47 \pm 6.12$\\\hline
Big Ben & $1.00$ & $0.91$ & $0.13$ & $27.89 \pm 6.35$ & $21.58 \pm 6.38$\\\hline
Sydney Opera House & $1.00$ & $0.74$ & $0.13$ & $26.92 \pm 5.89$ & $22.37 \pm 6.16$\\
\end{tabular}
\end{center}
\label{Table 1}
\end{table*}

In no-exit graphs, the DP policy generated by Algorithm \ref{Algorithm} is the optimal pure strategy for both players when the pursuit is guaranteed for all states. With this theoretical guarantee, we can use the DP oracle to construct the adversarial opponent and benchmark the zero-shot performance of the RL pursuer or evader. Our test graphs include Grid Map (a $10\times10$ grid), Scotland-Yard Map (from the board game Scotland-Yard), Downtown Map (a real-world location from Google Maps), and $7$ famous real-world spots (from Times Square to Sydney Opera House). The real-world graph structures for testing are illustrated in Figure \ref{graph}. We set the termination function $f(s)$ to be $1$ when half of the pursuers are simultaneously adjacent to the evader. This moderate success condition guarantees pursuit for all synthetic and real-world graphs described in the main paper when the pursuer number is more than $1$.

We first evaluate the zero-shot performance of the trained RL policies against the accurate DP oracle in no-exit PEGs with $2$ pursuers. The result under each graph is averaged over $500$ tests, each of which is forced to terminate after $128$ steps (corresponding to the length of an episode). The pursuit is guaranteed to be successful under the DP pursuer, while the DP evader attempts to avoid capture by maximizing the termination (evasion) timesteps. As is shown in Table \ref{Table 1}, when we use the RL pursuer trained through EPG to replace the DP pursuer, the pursuit success rates are still kept over $0.6$. In comparison, if we simply use a shortest path strategy (SPS) to directly approach the DP evader, the success rates can be sensitive to graph structures and low on average. On the other hand, when we use the RL evader to replace DP, the evasion timesteps do not drop significantly. The results demonstrate that our RL policies generalize well across graphs against the unexploitable DP opponent policies.

Now we evaluate the zero-shot performance of the trained RL policies against different opponents. Besides the DP opponent, we show the results of our RL pursuer against our RL evader in Table \ref{Table 4}. The success rate of RL - RL is always $1$ and thus omitted. Furthermore, we include the results of the approximately best-responding (BR) policies directly trained against our RL policies on the test graphs. Ideally, such results could reflect the worst-case performance of our RL policies. However, the BR policies are generally hard to train in practice, possibly because the EPG policies are fairly strong. We record their best results during $20000$ training episodes. As is shown in Table \ref{Table 4}, while the BR policies are better at exploiting our RL policies in comparison with our RL opponent policies, they show no advantage in comparison with DP. Therefore, considering the results in Table \ref{Table 1}, we can say that EPG policies have robust cross-graph zero-shot performance under unseen graph structures.

\begin{table*}[t]
\caption{Performance comparison of RL pursuer / evader against different opponents}
\begin{center}
\begin{tabular}{c|c|c|c|c|c}
\multirow{2}{*}{Graph Structure} & \multicolumn{2}{c|}{Pursuit SR \text{↑}} & \multicolumn{3}{c}{Evasion Timestep \text{↑}}\\\cline{2-6}
& RL - BR & RL - DP & RL - RL & BR - RL & DP - RL\\\hline\hline
Grid Map & $1.00$ & $1.00$ & $14.34 \pm 5.08$ & $11.67 \pm 2.60$ & $11.88 \pm 2.39$\\\hline
Scotland-Yard Map & $1.00$ & $0.99$ & $17.82 \pm 7.31$ & $13.12 \pm 3.65$ & $12.57 \pm 2.96$\\\hline
Downtown Map & $0.99$ & $0.99$ & $17.65 \pm 8.72$ & $13.81 \pm 5.83$ & $11.83 \pm 3.12$\\\hline
Times Square & $0.95$ & $0.98$ & $20.29 \pm 9.65$ & $16.20 \pm 5.16$ & $14.68 \pm 3.11$\\\hline
Walk of Fame & $0.67$ & $0.62$ & $34.34 \pm 20.18$ & $20.22 \pm 9.60$ & $20.00 \pm 4.99$\\\hline
Sagrada Familia & $0.76$ & $0.66$ & $26.87 \pm 11.00$ & $18.83 \pm 6.30$ & $17.89 \pm 4.59$\\\hline
The Bund & $0.56$ & $0.60$ & $30.66 \pm 17.27$ & $21.91 \pm 10.58$ & $20.59 \pm 5.59$\\\hline
Eiffel Tower & $0.98$ & $0.97$ & $28.41 \pm 16.80$ & $19.28 \pm 10.48$ & $18.47 \pm 6.12$\\\hline
Big Ben & $0.94$ & $0.91$ & $33.24 \pm 18.71$ & $23.07 \pm 12.91$ & $21.58 \pm 6.38$\\\hline
Sydney Opera House & $0.80$ & $0.74$ & $32.94 \pm 14.72$ & $25.78 \pm 16.79$ & $22.37 \pm 6.16$\\
\end{tabular}
\end{center}
\label{Table 4}
\end{table*}

\subsection{Ablation Study and Extended Evaluations}

\label{4.2}

Here we analyze the important aspects of our cross-graph policy training. We use $10$ unseen Dungeon maps and their corresponding graphs as a testing set. We conduct the ablation study by comparing the termination timesteps of different pursuer policies against the DP evader in no-exit PEGs.

\begin{figure*}[h]
    \centering
    \includegraphics[width=0.75\textwidth]{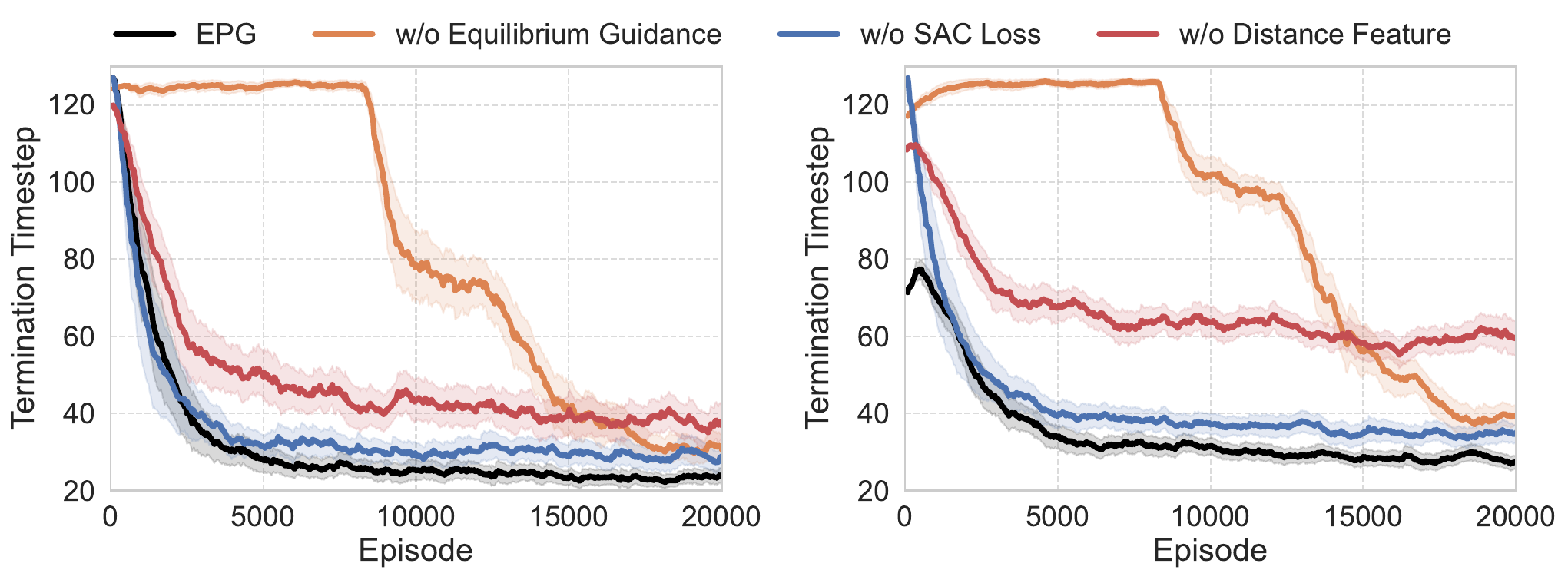}
    \caption{Termination timesteps of pursuer policies under training set (left) and testing set (right)}
    \label{ablation}
\end{figure*}

As is shown in Figure \ref{ablation} (error bars representing standard deviations), when we remove the equilibrium guidance from our framework, it is less efficient to train the generalized pursuer policy under the remaining SAC loss (orange). This phenomenon verifies that the policy exploration of SAC itself can have low efficiency under the sparse reward environment with an equilibrium adversary. When we remove the original SAC loss, the training becomes a kind of supervised learning, and the learned policy suffers from a clear performance decline due to the lack of the reward signal (blue). Therefore, it is reasonable to combine equilibrium guidance with reinforcement learning against an equilibrium adversary for effective policy generalization (black). If we directly use the $2$-dimensional position and the agent indicators to replace the shortest path distances as the feature of each node, the learned policy suffers from a significant performance decline and cannot guarantee its zero-shot performance (red). This verifies that our distance feature is suitable for cross-graph generalization of PEG policies.

We have also conducted some extended evaluations under no-exit PEGs. We consider the scenario with $6$ pursuers and use the proposed grouping mechanism to construct an approximate DP oracle. The corresponding results are provided in Table \ref{Table 2}, showing even higher success rates of RL pursuit in comparison with Table \ref{Table 1}. In Appendix \ref{E}, we further show that our RL pursuers can have a zero-shot performance even better than DP when the pursuit is not globally guaranteed. Additionally, while the pursuer policy is trained on a graph set with an average node number of $152.24$, we find that it can directly generalize to large-scale graphs possibly with more than $1000$ nodes (Appendix \ref{F.1}). Besides, we find that our policies significantly outperform the PSRO policies directly trained on the test graphs (Appendix \ref{F.2}). These results further demonstrate the versatility of our EPG framework.

\subsection{Performance Comparisons in Graphs with Exits}

\label{4.3}

\begin{figure*}
    \centering
    \includegraphics[width=0.32\textwidth]{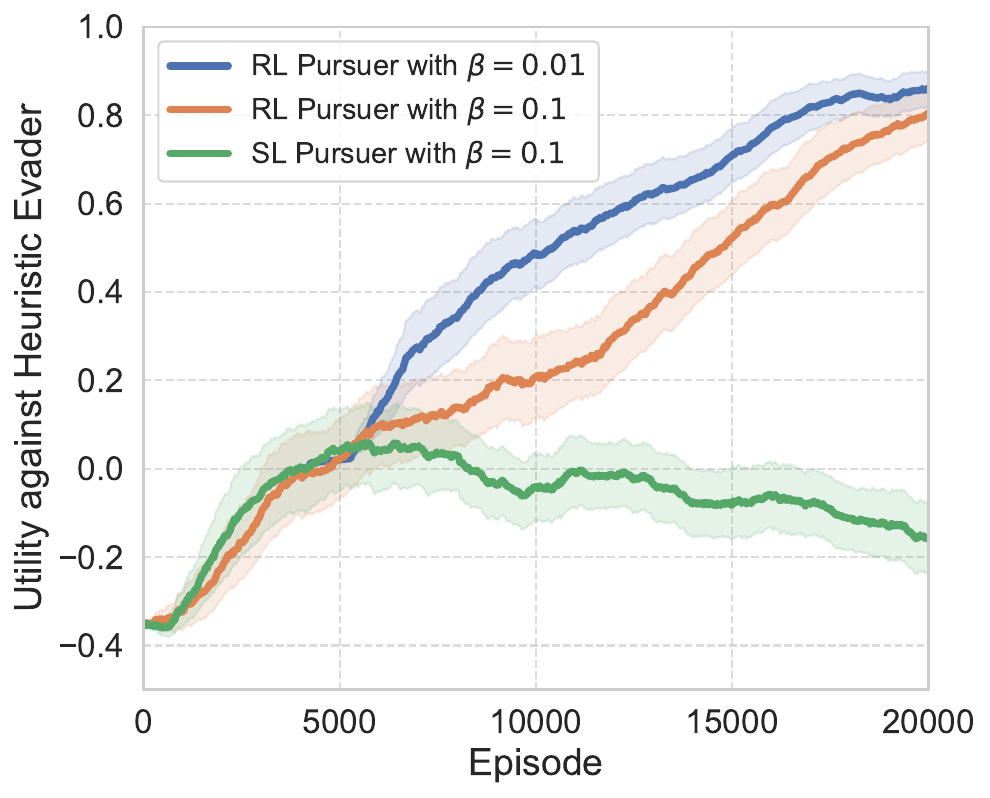}
    \includegraphics[width=0.63\textwidth]{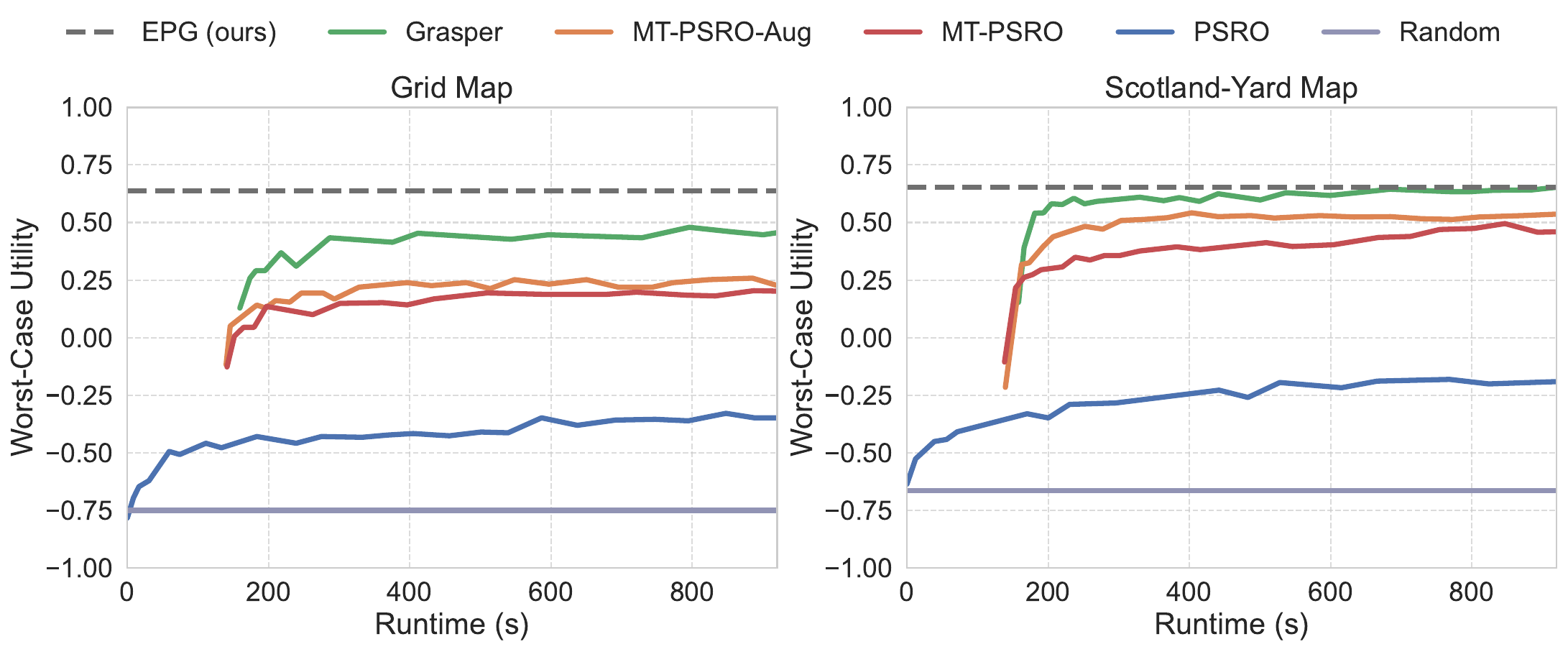}
    \caption{Zero-shot performance in $8$-exit PEGs during training (left) and testing (mid \& right)}
    \label{withexit}
\end{figure*}

Since the state-of-the-art approaches in graph-based PEGs (including Grasper \cite{li2024grasper} and multi-task PSRO (MT-PSRO) \cite{li2023solving}) deal with multi-exit PEGs, we also apply our EPG framework to the same problem setting of $5$ pursuers, $1$ evader, and $8$ exits described in the paper of Grasper \cite{li2024grasper}. As is mentioned in Section \ref{3.2}, we design an equilibrium heuristic based on bipartite graph matching (see Appendix \ref{B}) to construct an approximate equilibrium oracle. As is shown in Figure \ref{withexit} (left), our EPG framework steadily trains the RL pursuers under different hyperparameters $\beta\in\{0.01,0.1\}$ when using the heuristic approach. Mere supervised learning (green) without SAC loss can no longer guarantee a steady improvement in unseen graphs, though it can simply work in the no-exit scenario.

Since \cite{li2024grasper} also uses Grid Map and Scotland-Yard Map as two fixed testing graphs, it is direct to evaluate the zero-shot performance of our RL pursuers under the same game rules. The comparative methods of Grasper, MT-PSRO, and MT-PSRO with augmentation (MT-PSRO-Aug) all train their policies given the graph structure. Nevertheless, Figure \ref{withexit} (mid \& right; with results from \cite{li2024grasper}) shows that our zero-shot performance (the dashed lines) under the unseen graphs ($0.637$ for Grid Map and $0.652$ for Scotland-Yard Map) significantly outperforms their zero-shot performance (the starting points), which is with respect to unseen initial conditions. Even after a $10$-minute fine-tuning process using PSRO \cite{lanctot2017unified}, only Grasper in the Scotland-Yard Map can match the zero-shot performance of our policy. This result further demonstrates the generalization capability of our EPG framework.

\section{Conclusion}

This paper proposes Equilibrium Policy Generalization (EPG), a novel reinforcement learning framework for training generalized PEG policies with robust zero-shot performance across graph structures. Established upon the idea of constructing adversary and guidance with equilibrium policies, EPG features a general cross-graph RL pipeline applicable to both pursuer and evader sides in both no-exit and multi-exit scenarios. For no-exit PEGs, we propose a dynamic programming (DP) algorithm as an equilibrium oracle and theoretically prove that it generates pure-strategy Nash equilibrium with near-optimal time complexity. We further extend DP with a grouping mechanism and equip RL with a sequence model to facilitate scalability. Experiments and ablation studies verify that our EPG framework can effectively train a generalized pursuer or evader policy with robust zero-shot performance in unseen real-world graphs. When trained with a matching-based equilibrium heuristic that we propose for multi-exit PEGs, the RL pursuer policy exhibits a strong zero-shot performance that matches or outperforms the fine-tuned results from the state-of-the-art methods.

Although this paper focuses on pursuit-evasion games and thus introduces handcrafted DP/heuristic equilibrium oracles, the EPG framework is in principle applicable to any other game scenario through the use of general and approximate oracles like PSRO. One limitation of this work is that we only focus on the case where both players have perfect information of the game state. It can be interesting to further examine whether the perfect-information equilibrium oracles are still beneficial for robust policy learning under an imperfect-information or partially observable game setting.

\begin{ack}
This work was supported in part by the National Natural Science Foundation of China under Grants 62293541, 62293542, and 62476044; in part by the Beijing Natural Science Foundation under Grant 4232056; in part by the Beijing Nova Program under Grant 20240484514; and in part by the International Partnership Program of Chinese Academy of Sciences under Grant 104GJHZ2022013GC. We also thank all the reviewers for providing valuable comments that helped us improve this paper.
\end{ack}

\nocite{yu2022surprising,hou2022graphmae,qin2024multi,chu2025sft,fromme1984game,czarnecki2020real,lu2024last,lu2025divergence,lu2025constrained,sun2025unsupervised,chai2025survey,zhao2025seqwm}

\bibliographystyle{plain}
\bibliography{references}


\newpage

\section*{NeurIPS Paper Checklist}

\begin{enumerate}

\item {\bf Claims}
    \item[] Question: Do the main claims made in the abstract and introduction accurately reflect the paper's contributions and scope?
    \item[] Answer: \answerYes{}
    \item[] Justification: In the abstract and Section 1, we emphasize the major contribution of proposing the Equilibrium Policy Generalization (EPG) framework.
    \item[] Guidelines:
    \begin{itemize}
        \item The answer NA means that the abstract and introduction do not include the claims made in the paper.
        \item The abstract and/or introduction should clearly state the claims made, including the contributions made in the paper and important assumptions and limitations. A No or NA answer to this question will not be perceived well by the reviewers. 
        \item The claims made should match theoretical and experimental results, and reflect how much the results can be expected to generalize to other settings. 
        \item It is fine to include aspirational goals as motivation as long as it is clear that these goals are not attained by the paper. 
    \end{itemize}

\item {\bf Limitations}
    \item[] Question: Does the paper discuss the limitations of the work performed by the authors?
    \item[] Answer: \answerYes{}
    \item[] Justification: In Section 5, we discuss the limitation of this work. 
    \item[] Guidelines:
    \begin{itemize}
        \item The answer NA means that the paper has no limitation while the answer No means that the paper has limitations, but those are not discussed in the paper. 
        \item The authors are encouraged to create a separate "Limitations" section in their paper.
        \item The paper should point out any strong assumptions and how robust the results are to violations of these assumptions (e.g., independence assumptions, noiseless settings, model well-specification, asymptotic approximations only holding locally). The authors should reflect on how these assumptions might be violated in practice and what the implications would be.
        \item The authors should reflect on the scope of the claims made, e.g., if the approach was only tested on a few datasets or with a few runs. In general, empirical results often depend on implicit assumptions, which should be articulated.
        \item The authors should reflect on the factors that influence the performance of the approach. For example, a facial recognition algorithm may perform poorly when image resolution is low or images are taken in low lighting. Or a speech-to-text system might not be used reliably to provide closed captions for online lectures because it fails to handle technical jargon.
        \item The authors should discuss the computational efficiency of the proposed algorithms and how they scale with dataset size.
        \item If applicable, the authors should discuss possible limitations of their approach to address problems of privacy and fairness.
        \item While the authors might fear that complete honesty about limitations might be used by reviewers as grounds for rejection, a worse outcome might be that reviewers discover limitations that aren't acknowledged in the paper. The authors should use their best judgment and recognize that individual actions in favor of transparency play an important role in developing norms that preserve the integrity of the community. Reviewers will be specifically instructed to not penalize honesty concerning limitations.
    \end{itemize}

\item {\bf Theory assumptions and proofs}
    \item[] Question: For each theoretical result, does the paper provide the full set of assumptions and a complete (and correct) proof?
    \item[] Answer: \answerYes{}
    \item[] Justification: For the two theoretical results (Theorem 1 and Proposition 1), we provide their proofs in Appendix A.
    \item[] Guidelines:
    \begin{itemize}
        \item The answer NA means that the paper does not include theoretical results. 
        \item All the theorems, formulas, and proofs in the paper should be numbered and cross-referenced.
        \item All assumptions should be clearly stated or referenced in the statement of any theorems.
        \item The proofs can either appear in the main paper or the supplemental material, but if they appear in the supplemental material, the authors are encouraged to provide a short proof sketch to provide intuition. 
        \item Inversely, any informal proof provided in the core of the paper should be complemented by formal proofs provided in appendix or supplemental material.
        \item Theorems and Lemmas that the proof relies upon should be properly referenced. 
    \end{itemize}

    \item {\bf Experimental result reproducibility}
    \item[] Question: Does the paper fully disclose all the information needed to reproduce the main experimental results of the paper to the extent that it affects the main claims and/or conclusions of the paper (regardless of whether the code and data are provided or not)?
    \item[] Answer: \answerYes{}
    \item[] Justification: In Appendices C and D, we include all the information needed to reproduce our main experimental results.
    \item[] Guidelines:
    \begin{itemize}
        \item The answer NA means that the paper does not include experiments.
        \item If the paper includes experiments, a No answer to this question will not be perceived well by the reviewers: Making the paper reproducible is important, regardless of whether the code and data are provided or not.
        \item If the contribution is a dataset and/or model, the authors should describe the steps taken to make their results reproducible or verifiable. 
        \item Depending on the contribution, reproducibility can be accomplished in various ways. For example, if the contribution is a novel architecture, describing the architecture fully might suffice, or if the contribution is a specific model and empirical evaluation, it may be necessary to either make it possible for others to replicate the model with the same dataset, or provide access to the model. In general. releasing code and data is often one good way to accomplish this, but reproducibility can also be provided via detailed instructions for how to replicate the results, access to a hosted model (e.g., in the case of a large language model), releasing of a model checkpoint, or other means that are appropriate to the research performed.
        \item While NeurIPS does not require releasing code, the conference does require all submissions to provide some reasonable avenue for reproducibility, which may depend on the nature of the contribution. For example
        \begin{enumerate}
            \item If the contribution is primarily a new algorithm, the paper should make it clear how to reproduce that algorithm.
            \item If the contribution is primarily a new model architecture, the paper should describe the architecture clearly and fully.
            \item If the contribution is a new model (e.g., a large language model), then there should either be a way to access this model for reproducing the results or a way to reproduce the model (e.g., with an open-source dataset or instructions for how to construct the dataset).
            \item We recognize that reproducibility may be tricky in some cases, in which case authors are welcome to describe the particular way they provide for reproducibility. In the case of closed-source models, it may be that access to the model is limited in some way (e.g., to registered users), but it should be possible for other researchers to have some path to reproducing or verifying the results.
        \end{enumerate}
    \end{itemize}

\item {\bf Open access to data and code}
    \item[] Question: Does the paper provide open access to the data and code, with sufficient instructions to faithfully reproduce the main experimental results, as described in supplemental material?
    \item[] Answer: \answerYes{}
    \item[] Justification: We provide the data and code as well as the instructions in the Supplementary Material.
    \item[] Guidelines:
    \begin{itemize}
        \item The answer NA means that paper does not include experiments requiring code.
        \item Please see the NeurIPS code and data submission guidelines (\url{https://nips.cc/public/guides/CodeSubmissionPolicy}) for more details.
        \item While we encourage the release of code and data, we understand that this might not be possible, so “No” is an acceptable answer. Papers cannot be rejected simply for not including code, unless this is central to the contribution (e.g., for a new open-source benchmark).
        \item The instructions should contain the exact command and environment needed to run to reproduce the results. See the NeurIPS code and data submission guidelines (\url{https://nips.cc/public/guides/CodeSubmissionPolicy}) for more details.
        \item The authors should provide instructions on data access and preparation, including how to access the raw data, preprocessed data, intermediate data, and generated data, etc.
        \item The authors should provide scripts to reproduce all experimental results for the new proposed method and baselines. If only a subset of experiments are reproducible, they should state which ones are omitted from the script and why.
        \item At submission time, to preserve anonymity, the authors should release anonymized versions (if applicable).
        \item Providing as much information as possible in supplemental material (appended to the paper) is recommended, but including URLs to data and code is permitted.
    \end{itemize}

\item {\bf Experimental setting/details}
    \item[] Question: Does the paper specify all the training and test details (e.g., data splits, hyperparameters, how they were chosen, type of optimizer, etc.) necessary to understand the results?
    \item[] Answer: \answerYes{}
    \item[] Justification: In Appendices C and D, we specify all the training and test details.
    \item[] Guidelines:
    \begin{itemize}
        \item The answer NA means that the paper does not include experiments.
        \item The experimental setting should be presented in the core of the paper to a level of detail that is necessary to appreciate the results and make sense of them.
        \item The full details can be provided either with the code, in appendix, or as supplemental material.
    \end{itemize}

\item {\bf Experiment statistical significance}
    \item[] Question: Does the paper report error bars suitably and correctly defined or other appropriate information about the statistical significance of the experiments?
    \item[] Answer: \answerYes{}
    \item[] Justification: We report error bars for all the learning curves and the termination timesteps in the tables.
    \item[] Guidelines:
    \begin{itemize}
        \item The answer NA means that the paper does not include experiments.
        \item The authors should answer "Yes" if the results are accompanied by error bars, confidence intervals, or statistical significance tests, at least for the experiments that support the main claims of the paper.
        \item The factors of variability that the error bars are capturing should be clearly stated (for example, train/test split, initialization, random drawing of some parameter, or overall run with given experimental conditions).
        \item The method for calculating the error bars should be explained (closed form formula, call to a library function, bootstrap, etc.)
        \item The assumptions made should be given (e.g., Normally distributed errors).
        \item It should be clear whether the error bar is the standard deviation or the standard error of the mean.
        \item It is OK to report 1-sigma error bars, but one should state it. The authors should preferably report a 2-sigma error bar than state that they have a 96\% CI, if the hypothesis of Normality of errors is not verified.
        \item For asymmetric distributions, the authors should be careful not to show in tables or figures symmetric error bars that would yield results that are out of range (e.g. negative error rates).
        \item If error bars are reported in tables or plots, The authors should explain in the text how they were calculated and reference the corresponding figures or tables in the text.
    \end{itemize}

\item {\bf Experiments compute resources}
    \item[] Question: For each experiment, does the paper provide sufficient information on the computer resources (type of compute workers, memory, time of execution) needed to reproduce the experiments?
    \item[] Answer: \answerYes{}
    \item[] Justification: In Appendix C.3, we report the computer resources and detailed time costs. In the last paragraph of Section 3.3, we also compare the inference time complexity of tabular DP and parameterized RL policies under varying graph structures.
    \item[] Guidelines:
    \begin{itemize}
        \item The answer NA means that the paper does not include experiments.
        \item The paper should indicate the type of compute workers CPU or GPU, internal cluster, or cloud provider, including relevant memory and storage.
        \item The paper should provide the amount of compute required for each of the individual experimental runs as well as estimate the total compute. 
        \item The paper should disclose whether the full research project required more compute than the experiments reported in the paper (e.g., preliminary or failed experiments that didn't make it into the paper). 
    \end{itemize}
    
\item {\bf Code of ethics}
    \item[] Question: Does the research conducted in the paper conform, in every respect, with the NeurIPS Code of Ethics \url{https://neurips.cc/public/EthicsGuidelines}?
    \item[] Answer: \answerYes{}
    \item[] Justification: We confirm that the research conforms with the NeurIPS Code of Ethics.
    \item[] Guidelines:
    \begin{itemize}
        \item The answer NA means that the authors have not reviewed the NeurIPS Code of Ethics.
        \item If the authors answer No, they should explain the special circumstances that require a deviation from the Code of Ethics.
        \item The authors should make sure to preserve anonymity (e.g., if there is a special consideration due to laws or regulations in their jurisdiction).
    \end{itemize}

\item {\bf Broader impacts}
    \item[] Question: Does the paper discuss both potential positive societal impacts and negative societal impacts of the work performed?
    \item[] Answer: \answerYes{}
    \item[] Justification: We discuss the broader impacts in Appendix G.
    \item[] Guidelines:
    \begin{itemize}
        \item The answer NA means that there is no societal impact of the work performed.
        \item If the authors answer NA or No, they should explain why their work has no societal impact or why the paper does not address societal impact.
        \item Examples of negative societal impacts include potential malicious or unintended uses (e.g., disinformation, generating fake profiles, surveillance), fairness considerations (e.g., deployment of technologies that could make decisions that unfairly impact specific groups), privacy considerations, and security considerations.
        \item The conference expects that many papers will be foundational research and not tied to particular applications, let alone deployments. However, if there is a direct path to any negative applications, the authors should point it out. For example, it is legitimate to point out that an improvement in the quality of generative models could be used to generate deepfakes for disinformation. On the other hand, it is not needed to point out that a generic algorithm for optimizing neural networks could enable people to train models that generate Deepfakes faster.
        \item The authors should consider possible harms that could arise when the technology is being used as intended and functioning correctly, harms that could arise when the technology is being used as intended but gives incorrect results, and harms following from (intentional or unintentional) misuse of the technology.
        \item If there are negative societal impacts, the authors could also discuss possible mitigation strategies (e.g., gated release of models, providing defenses in addition to attacks, mechanisms for monitoring misuse, mechanisms to monitor how a system learns from feedback over time, improving the efficiency and accessibility of ML).
    \end{itemize}
    
\item {\bf Safeguards}
    \item[] Question: Does the paper describe safeguards that have been put in place for responsible release of data or models that have a high risk for misuse (e.g., pretrained language models, image generators, or scraped datasets)?
    \item[] Answer: \answerNA{}
    \item[] Justification: We confirm that the paper poses no such risks.
    \item[] Guidelines:
    \begin{itemize}
        \item The answer NA means that the paper poses no such risks.
        \item Released models that have a high risk for misuse or dual-use should be released with necessary safeguards to allow for controlled use of the model, for example by requiring that users adhere to usage guidelines or restrictions to access the model or implementing safety filters. 
        \item Datasets that have been scraped from the Internet could pose safety risks. The authors should describe how they avoided releasing unsafe images.
        \item We recognize that providing effective safeguards is challenging, and many papers do not require this, but we encourage authors to take this into account and make a best faith effort.
    \end{itemize}

\item {\bf Licenses for existing assets}
    \item[] Question: Are the creators or original owners of assets (e.g., code, data, models), used in the paper, properly credited and are the license and terms of use explicitly mentioned and properly respected?
    \item[] Answer: \answerYes{}
    \item[] Justification: In Section 4, we cite the paper that introduces the Dungeon environment, where we derive various graph structures from the procedurally generated maps.
    \item[] Guidelines:
    \begin{itemize}
        \item The answer NA means that the paper does not use existing assets.
        \item The authors should cite the original paper that produced the code package or dataset.
        \item The authors should state which version of the asset is used and, if possible, include a URL.
        \item The name of the license (e.g., CC-BY 4.0) should be included for each asset.
        \item For scraped data from a particular source (e.g., website), the copyright and terms of service of that source should be provided.
        \item If assets are released, the license, copyright information, and terms of use in the package should be provided. For popular datasets, \url{paperswithcode.com/datasets} has curated licenses for some datasets. Their licensing guide can help determine the license of a dataset.
        \item For existing datasets that are re-packaged, both the original license and the license of the derived asset (if it has changed) should be provided.
        \item If this information is not available online, the authors are encouraged to reach out to the asset's creators.
    \end{itemize}

\item {\bf New assets}
    \item[] Question: Are new assets introduced in the paper well documented and is the documentation provided alongside the assets?
    \item[] Answer: \answerNA{}
    \item[] Justification: This paper does not formally release any new assets. While reusable in future research, the generated data (including the graphs and policies) are mainly for evaluation purposes in this paper.
    \item[] Guidelines:
    \begin{itemize}
        \item The answer NA means that the paper does not release new assets.
        \item Researchers should communicate the details of the dataset/code/model as part of their submissions via structured templates. This includes details about training, license, limitations, etc. 
        \item The paper should discuss whether and how consent was obtained from people whose asset is used.
        \item At submission time, remember to anonymize your assets (if applicable). You can either create an anonymized URL or include an anonymized zip file.
    \end{itemize}

\item {\bf Crowdsourcing and research with human subjects}
    \item[] Question: For crowdsourcing experiments and research with human subjects, does the paper include the full text of instructions given to participants and screenshots, if applicable, as well as details about compensation (if any)? 
    \item[] Answer: \answerNA{}
    \item[] Justification: We confirm that the paper does not involve crowdsourcing nor research with human subjects.
    \item[] Guidelines:
    \begin{itemize}
        \item The answer NA means that the paper does not involve crowdsourcing nor research with human subjects.
        \item Including this information in the supplemental material is fine, but if the main contribution of the paper involves human subjects, then as much detail as possible should be included in the main paper. 
        \item According to the NeurIPS Code of Ethics, workers involved in data collection, curation, or other labor should be paid at least the minimum wage in the country of the data collector. 
    \end{itemize}

\item {\bf Institutional review board (IRB) approvals or equivalent for research with human subjects}
    \item[] Question: Does the paper describe potential risks incurred by study participants, whether such risks were disclosed to the subjects, and whether Institutional Review Board (IRB) approvals (or an equivalent approval/review based on the requirements of your country or institution) were obtained?
    \item[] Answer: \answerNA{}
    \item[] Justification: We confirm that the paper does not involve crowdsourcing nor research with human subjects.
    \item[] Guidelines:
    \begin{itemize}
        \item The answer NA means that the paper does not involve crowdsourcing nor research with human subjects.
        \item Depending on the country in which research is conducted, IRB approval (or equivalent) may be required for any human subjects research. If you obtained IRB approval, you should clearly state this in the paper. 
        \item We recognize that the procedures for this may vary significantly between institutions and locations, and we expect authors to adhere to the NeurIPS Code of Ethics and the guidelines for their institution. 
        \item For initial submissions, do not include any information that would break anonymity (if applicable), such as the institution conducting the review.
    \end{itemize}

\item {\bf Declaration of LLM usage}
    \item[] Question: Does the paper describe the usage of LLMs if it is an important, original, or non-standard component of the core methods in this research? Note that if the LLM is used only for writing, editing, or formatting purposes and does not impact the core methodology, scientific rigorousness, or originality of the research, declaration is not required.
    \item[] Answer: \answerNA{}
    \item[] Justification: We confirm that this research does not involve LLMs as any important, original, or non-standard components.
    \item[] Guidelines:
    \begin{itemize}
        \item The answer NA means that the core method development in this research does not involve LLMs as any important, original, or non-standard components.
        \item Please refer to our LLM policy (\url{https://neurips.cc/Conferences/2025/LLM}) for what should or should not be described.
    \end{itemize}

\end{enumerate}


\newpage
\appendix

\section{Omitted Proofs}

\subsection{Proof of Theorem \ref{T1}}

\label{A.1}

\begin{proof}
For no-exit PEGs, the Nash value satisfies the following Bellman minimax equation:
\begin{align*}
{{V}^{*}}(s)=\left\{ \begin{matrix}
   \underset{\mu (s)\in \Delta (\mathcal{A})}{\mathop{\max }}\,\underset{b\in \mathcal{B}}{\mathop{\min }}\,\sum\limits_{a\in \mathcal{A}}{\mu (s,a)\left( r(s,a,b)+\gamma \sum\limits_{{{s}^{\prime }}\in S}{\mathcal{P}(s,a,b,{{s}^{\prime }}){{V}^{*}}({{s}^{\prime }})} \right)},\quad f(s)=0  \\
   1,\quad f(s)=1  \\
\end{matrix} \right.
\end{align*}

Since the transition is deterministic and a non-zero reward is received only when a termination state is reached, we can simplify the Bellman equation as follows:
\begin{align*}
{{V}^{*}}(s)=\left\{ \begin{matrix}
   \underset{\mu (s)\in \Delta (\mathcal{A})}{\mathop{\max }}\,\underset{b\in \mathcal{B}}{\mathop{\min }}\,\sum\limits_{a\in \mathcal{A}}{\mu (s,a)\gamma {{V}^{*}}({{s}^{\prime }}=\mathcal{P}(s,a,b))},\quad f(s)=0  \\
   1,\quad f(s)=1  \\
\end{matrix} \right.
\end{align*}

The equilibrium policy for the max-player satisfies:
\begin{align*}
{{\mu }^{*}}(s)\in \underset{\mu (s)\in \Delta (\mathcal{A})}{\mathop{\arg \max }}\,\left\{ \underset{b\in \mathcal{B}}{\mathop{\min }}\,\sum\limits_{a\in \mathcal{A}}{\mu (s,a){{V}^{*}}({{s}^{\prime }}=\mathcal{P}(s,a,b))} \right\}
\end{align*}

When there is a pure-strategy Nash equilibrium in the game, the $\arg\max$ has a pure-strategy solution, and the Bellman equation can be further simplified:
\begin{align}
\label{simplified}
{{V}^{*}}(s)=\gamma\ \underset{a\in \mathcal{A}}{\mathop{\max }}\,\underset{b\in \mathcal{B}}{\mathop{\min }}\,{{V}^{*}}({{s}^{\prime }}=\mathcal{P}(s,a,b))
\end{align}

Note that the Nash value has the form of ${{V}^{*}}(s)={{\gamma }^{d}}(d\in \mathbb{N})$. Therefore, we consider using mathematical induction. We assume that ${{V}^{*}}({s})={{\gamma }^{D({s})}}$ holds for all states $s$ that satisfies either ${{V}^{*}}({s})={{\gamma }^{d}}$ or $D({s})=d$ when $d<k$ $(\gamma^d>\gamma^k)$. We want to prove that ${{V}^{*}}({{s}})={{\gamma }^{D({{s}})}}$ holds for all states $s$ that satisfies either ${{V}^{*}}({{s}})={{\gamma }^{k}}$ or $D({{s}})=k$. Clearly, our initialization guarantees that the proposition holds for $k=0$. Our update condition $D(n_p,n_e)=\infty$ guarantees that every state $s\in S$ is pushed into and popped from $\mathcal{Q}$ at most once. Note that the following proof reverses the notations of $s$ and $s^\prime$ in (\ref{simplified}) to better align with $s=(s_p,s_e)$ in Algorithm \ref{Algorithm}.

Now, we prove the first half of the proposition. For an arbitrary state $s^\prime=(n_p,n_e)$ that satisfies ${{V}^{*}}({{s^\prime}})={{\gamma }^{k}}$, the simplified Bellman equation (\ref{simplified}) guarantees that there exists $a=s_{p}\in \mathcal{A}(n_p)$ and $b=s_{e}\in \mathcal{B}(n_e)$ such that ${{V}^{*}}(s^\prime)=\gamma {{V}^{*}}({s}=\mathcal{P}(s^\prime,a,b))$. Therefore, there exists $s=(s_{p},s_{e})$ such that ${{V}^{*}}({s})={{\gamma }^{k-1}}$. According to the first half of the induction hypothesis, we have that $D({s})=k-1<\infty $, which implies that the algorithm once pushed $s^\prime$ into $\mathcal{Q}$. Besides, the Bellman equation guarantees that $\forall {{b}^{\prime }}\in \mathcal{B}(n_e),{{V}^{*}}(\mathcal{P}(s^\prime,a,{{b}^{\prime }}))\ge {{V}^{*}}(\mathcal{P}(s^\prime,a,b))={{V}^{*}}({s})={{\gamma }^{k-1}}>\gamma^k$. By induction hypothesis, $D({{s}_{p}},n_{e}^{\prime })\le D({{s}_{p}},{{s}_{e}})$ holds for any neighbor $n^\prime_e$ of $n_e$. Therefore, the algorithm must enumerate $n_e$ when popping $s=(s_p,s_e)$. If we have $D(n_p,n_e)=\infty$ at the moment, then $n_p$ will be enumerated in the inner loop, and we will have $D(n_p,n_e)=D(s_p,s_e)+1=k$. Now we complete the proof by showing that $D(n_p,n_e)<\infty$ implies $D(n_p,n_e)=k$. Actually, if $k<D({{n}_{p}},{{n}_{e}})<\infty $, then $D(s^\prime)$ must be computed by adding $1$ to some $D(s^{\prime\prime})\geq k$. Since $s^{\prime\prime}$ must be popped from $\mathcal{Q}$ no later than $s$, it is contradictory to the fact that $D(s^{\prime\prime})>D(s)=k-1$. If $D({{n}_{p}},{{n}_{e}})<k$, then the second half of the induction hypothesis implies that $V^*(s^\prime)=\gamma^{D(n_p,n_e)}$, which is contradictory to the fact that $V^*(s^\prime)=\gamma^k$.

Then, we prove the second half of the proposition. For an arbitrary state $s^\prime=(n_p,n_e)$ that satisfies $D(s^\prime)=k$, the $D(s)$ must be computed by adding $1$ to some $D(s)=k-1$, where $s=(s_p,s_e)$. According to the first half of the induction hypothesis, we have $V^*(s)=\gamma^k$. The algorithm guarantees that $D({{s}_{p}},n_{e}^{\prime })\le D({{s}_{p}},{{s}_{e}})=k-1$ holds for any neighbor $n^\prime_e$ of $n_e$. By induction hypothesis, it holds that $\forall {{b}^{\prime }}\in \mathcal{B}(n_e),{{V}^{*}}(\mathcal{P}(s^\prime,a,{{b}^{\prime }}))\ge {{V}^{*}}(\mathcal{P}(s^\prime,a,b))$ when $a=s_p\in\mathcal{A}(n_p)$ and $b=s_e\in\mathcal{B}(n_e)$. Therefore, $\underset{b\in \mathcal{B}(n_e)}{\mathop{\min }}\,{{V}^{*}}(\mathcal{P}({{s}^{\prime }},a,b))={{\gamma }^{k-1}}$ when $a=s_p\in\mathcal{A}(n_p)$. If there exists $a^\dagger=s_p^\dagger\in\mathcal{A}(n_p)$ such that $\underset{b\in \mathcal{B}(n_e)}{\mathop{\min }}\,{{V}^{*}}(\mathcal{P}({{s}^{\prime }},a^\prime,b))>{{\gamma }^{k-1}}$, then we let $b^\dagger=\underset{b\in \mathcal{B}(n_e)}{\mathop{\arg \min }}\,{{V}^{*}}(\mathcal{P}({{s}^{\prime }},a^\dagger,b))>{{\gamma }^{k-1}}$ and let $s^\dagger=(s_p^\dagger,s_e^\dagger=b^\dagger)$. According to the first half of the induction hypothesis, $D(s_p^\dagger,n_e^\prime)\leq D(s_p^\dagger,s_e^\dagger)<k-1$ holds for any neighbor $n^\prime_e$ of $n_e$. Since $D(s_p^\dagger,s_e^\dagger)<D(s_p,s_e)$, $s^\dagger$ must be popped from $\mathcal{Q}$ earlier than $s$, which means that $D(s)=\infty$ when $s^\dagger$ is popped. Therefore, $s^\prime=(n_p,n_e)$ must be enumerated when $s^\dagger$ is popped, which is contradictory to the fact that $D(s^\prime)=\infty$ when $s$ is popped. Therefore, $V^*(s^\prime)=\gamma\ \underset{a\in \mathcal{A}}{\mathop{\max }}\,\underset{b\in \mathcal{B}}{\mathop{\min }}\,{{V}^{*}}(\mathcal{P}(s^\prime,a,b))=\gamma^k$.

For now, we have proved that ${{V}^{*}}(s)={{\gamma }^{D(s)}}$. Therefore:
\begin{align*}
\mu ({{s}_{p}},{{s}_{e}})=\underset{\text{neighbor }{{n}_{p}}\text{ of }{{s}_{p}}}{\mathop{\arg \min }}\,\left\{ \underset{\text{neighbor }{{n}_{e}}\text{ of }{{s}_{e}}}{\mathop{\max }}\,D({{n}_{p}},{{n}_{e}}) \right\}\Rightarrow \mu (s)=\ \underset{a\in \mathcal{A}}{\mathop{\arg \max }}\,\underset{b\in \mathcal{B}}{\mathop{\min }}\,{{V}^{*}}(\mathcal{P}(s,a,b))\\
\nu ({{s}_{p}},{{s}_{e}})=\underset{\text{neighbor }{{n}_{e}}\text{ of }{{s}_{e}}}{\mathop{\arg \max }}\,\left\{ \underset{\text{neighbor }{{n}_{p}}\text{ of }{{s}_{p}}}{\mathop{\min }}\,D({{n}_{p}},{{n}_{e}}) \right\}\Rightarrow \nu (s)=\ \underset{b\in \mathcal{B}}{\mathop{\arg \min }}\,\underset{a\in \mathcal{A}}{\mathop{\max }}\,{{V}^{*}}(\mathcal{P}(s,a,b))
\end{align*}

As there exists a pure-strategy Nash equilibrium $({{\mu }^{*}},{{\nu }^{*}})$, it is directly guaranteed that $(\mu ,\nu )$ is a Nash equilibrium.
\end{proof}

\subsection{Proof of Proposition \ref{P1}}

\label{A.2}

\begin{proposition}
[Basic property of the shortest path distance feature]
\label{P1}
When the game is collision-free with respect to the agents' node indexes, the input $(s_f,c)$ is sufficient to reconstruct the global state with the current agent order $l$ for sequential decision-making.
\end{proposition}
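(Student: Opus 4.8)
The plan is to exploit the defining property of shortest path distances: for any node $v$ and any agent located at node $u$, the corresponding distance entry equals $d(v,u)$, which is zero if and only if $v=u$ and strictly positive otherwise. Recall that $s_f$ stacks, for every node $v\in\mathcal{V}$, the vector of normalized shortest path distances from $v$ to each of the $m+1$ agents (and to each exit), so that each agent contributes one coordinate channel across the $n$ nodes. I would first argue that the normalization can be taken to preserve the zero entry — for instance, dividing each distance channel by the graph diameter keeps $0$ as the unique minimizer of that channel — so that the zero pattern of every channel remains readable from $s_f$. The exit channels are fixed by the graph and play no role in state reconstruction; they are harmless and can be ignored.

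First I would reconstruct all agent positions. Fix the distance channel associated with agent $i$ (a pursuer or the evader). Scanning this channel over all $n$ nodes, the unique node $v$ whose entry is zero is exactly the current location $u_i$ of agent $i$, since $d(v,u_i)=0\iff v=u_i$. Repeating this for every channel recovers the full tuple of positions $(v_p^1,\dots,v_p^m,v_e)$. In the sequential setting these are precisely the partially updated positions encoded by $(s,a_1,\dots,a_{l-1})$, namely the new positions for pursuers $1,\dots,l-1$ and the old positions for pursuers $l,\dots,m$ together with the evader. This step uses only the distance feature and holds irrespective of collisions.

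Next I would recover the current order $l$ from the extra input $c$. By construction $c$ is the node occupied by the agent about to act, which is pursuer $l$. I would examine the slice of $s_f$ at node $c$ and read off which channels vanish there: a channel is zero at $c$ precisely when its associated agent sits on node $c$. Here the collision-free hypothesis is essential, since it guarantees that at most one agent occupies node $c$, so exactly one channel is zero at $c$, and its index identifies the acting pursuer uniquely as $l$. Combined with the reconstructed positions, this yields the intermediate global state together with the order $l$, as claimed.

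The hard part will be this last step. Without the collision-free assumption the zero pattern at $c$ could be shared by several agents — for example a pursuer co-located with the evader, or two pursuers stacked on the same node — leaving the identity of the acting agent, and hence $l$, ambiguous. I therefore expect the proof to hinge on stating collision-freeness precisely as the injectivity of the map from agents to occupied node indexes at every intermediate stage of the sequential process, and on verifying that the chosen normalization does not destroy the zero entries relied upon in both reconstruction steps.
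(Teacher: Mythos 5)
Your proposal is correct and follows essentially the same argument as the paper's proof: you recover all agent positions from the unique zero entry in each agent's distance channel, then use the collision-free hypothesis to guarantee that the column of $s_f$ at node $c$ contains exactly one zero, whose channel index identifies the acting agent's order $l$. Your added remarks on zero-preserving normalization and ignoring the exit channels are sensible elaborations of points the paper leaves implicit, not a different route.
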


\begin{proof}
Recall that the global state $(s_p,s_e)={{s}_{p}}=(v_{p}^{1},v_{p}^{2},\cdots ,v_{p}^{m},v_e)$, where $m$ is the number of pursuers.

Note that $s_f\in\mathbb{R}^{(m+1)\times n}$ contains the normalized shortest path distances of all $n=\left|\mathcal{V}\right|$ nodes to all $m+1$ agents. Since the distance is zero only when the agent is at the node, $s_f(k,v)=0$ implies $v=v_p^k$ when $k\leq m$ and implies $v=v_e$ when $k=m+1$. Therefore, the global state $(s_p,s_e)$ can be reconstructed by checking the zeros in $s_f$.

When the game is collision-free with respect to the agents' node indexes, there is at most one zero in each column of $s_f$. Since $c$ is the index of the current pursuer agent, there must be exactly one zero in the $c$-th column. Let $v_c$ be the $c$-th node in the graph. Since $s_f(l,v_c)=0$, the current agent order $l$ can also be obtained by checking the zero in the $c$-th column of $s_f$.
\end{proof}

\newpage

\section{Equilibrium Heuristic for PEGs with Exits}

\label{B}

In no-exit graphs, the termination function $f(s_p,s_e)$ is $1$ when a number of pursuers are adjacent to the evader. Since there is no termination condition about exits, the pursuers only care about how soon they can capture the evader. In graphs with exits, however, the pursuers lose when $g(s_p,s_e)=1$, and a successful pursuit requires strict overlapping rather than adjacency (see \cite{li2024grasper}). Therefore, to block the exits around the evader is more reasonable than to block all possible actions of the evader and capture it. For the evader, it is in turn reasonable to select an exit that cannot be blocked in time.

In the multi-exit scenario, a single pursuer is enough to block a single exit if it is at least as close as the evader with respect to the shortest path distance. Also, it is theoretically impossible for multiple pursuers to block the exit if none of them is as close, since the evader can simply follow the shortest path towards the exit. Actually, if there exists one pursuer who can block the evader on its shortest path in time, it directly implies that the pursuer has a path to the exit no longer than the evader's, which contradicts the premise. Therefore, the cooperative behaviors among pursuers can be approximately abstracted as one-to-one exit allocation, which can be further formulated by bipartite graph matching.

With the above-mentioned considerations, we construct an equilibrium heuristic based on bipartite graph matching to efficiently generate reasonable policies for both players in multi-exit PEGs. Given the current state of the game, the heuristic algorithm can be described by the following four steps:

\begin{itemize}
\item Under the current state $s$, construct a bipartite graph $G_{b}=\left\langle (\mathcal{V}_{exit},\mathcal{V}_{pursuer}),E_b \right\rangle$, where the nodes in $\mathcal{V}_{exit}$ correspond to the exits in the PEG graph $G$ and the nodes in $\mathcal{V}_{pursuer}$ correspond to the pursuers. There is an edge $e_b\in E_b$ between an exit and a pursuer if and only if the pursuer's shortest path distance to the exit is not longer than the evader's. Intuitively, the existence of an exit-pursuer edge means that the pursuer can reach the exit no later than the evader and thus block the exit in time.
\item If a pursuer node has no related edges, which means it cannot block any exit, then it is removed from $G_{b}$, and this pursuer's current policy is to follow the shortest path towards the evader. If an exit node has no related edges, which means it cannot be blocked by any pursuer, then it is also removed from $G_{b}$. For the remaining exit nodes, sort them by their distance to the evader in an ascending order.
\item Compute the maximum $k$ that guarantees the existence of a perfect matching in $G_b$ for the first $k$ nodes $\{v_i|i\leq k\}\subseteq\mathcal{V}_{exit}$. This can be simply fulfilled by running the $\mathcal{O}(m^3)$ ($m=\max \left\{ \left| {{\mathcal{V}}_{exit}} \right|,\left| {{\mathcal{V}}_{pursuer}} \right| \right\}$) Hungarian Algorithm for each $k\leq \left| {{\mathcal{V}}_{exit}} \right|$. Intuitively, the existence of a perfect matching means that the closest $k$ exits to the evader can be blocked simultaneously.
\item For the pursuers involved in the perfect matching, their current policies are to follow the shortest path towards the matched exits. For any remaining pursuer indicated by $v$, the current policy is to follow the shortest path towards the exit with the minimum index $i$ that satisfies $(v_i,v)\in E_b$. For the evader, if at least one exit is removed from $G_{b}$, its current policy is to follow the shortest path to the closest removed exit. Otherwise, its current policy is to follow the shortest path to the closest exit that has not been occupied by any pursuer.
\end{itemize}

Note that the heuristic algorithm encourages the pursuers to increase the (lower-bound) steps for a worst-case evader to reach an exit. This can be viewed as approximating the equilibrium policy via a minimax mechanism. Besides, since the heuristic evader can flexibly switch to a better exit when the game situation changes, it is also hard to exploit in practice.

Compared to the DP approach in no-exit scenarios, the heuristic approach can compute the current policy for each state independently. At the price of being more exploitable in certain handcrafted cases, the heuristic policy can be directly computed during RL training, with a time complexity polynomial in the agent number for any current state. Since it avoids a preprocessing stage that traverses the state space, the heuristic approach also guarantees a better applicability of our EPG framework in multi-exit scenarios.

\newpage

\section{Training Details}

\subsection{Implementation Details}

For the training of Q functions, which is not specified in Section \ref{3.3}, we employ the common technique of double target networks to avoid overestimations \cite{hasselt2010double}. When there are two or three pursuers, we simply use centralized value networks to directly represent Q functions. In the scenarios with more pursuers, we use value-decomposition networks \cite{sunehag2017value} as a simple way to decompose the joint Q functions. As the pursuers are homogeneous agents, we have a direct decomposition ${{Q}_{\phi }}(s,a)=\sum\nolimits_{i=1}^{m}{{{Q}_{\phi }}(s,{{a}_{i}})}$, where $m$ is the number of the pursuers.

In either the no-exit or multi-exit scenario, we use the same hyperparameter setting shown in Table $\ref{Parameter}$ throughout the EPG training for either pursuer or evader policy. Note that the target entropy $\overline{H}$ in SAC \cite{christodoulou2019soft} is defined in the form of $-\log \left( 1/\left| \mathcal{A} \right| \right)=\log \left| \mathcal{A} \right|$ multiplied by a predetermined coefficient.

\begin{table*}[h]
\caption{Hyperparameter setting}
\begin{center}
\begin{tabular}{|c|c|}
\hline
discount factor $\gamma$ & $0.99$ \\\hline
embedding dimension	$d$ & $128$ \\\hline
number of attention heads & $8$ \\\hline
equilibrium guidance coefficient $\beta$ & $0.1$ \\\hline
pursuer target entropy coefficient & $0.05$ \\\hline
evader target entropy coefficient & $0.1$ \\\hline
batch size & $128$ \\\hline
learning rate & ${10}^{-5}$ \\\hline
update epoch & $8$ \\\hline
\end{tabular}
\end{center}
\label{Parameter}
\end{table*}

\subsection{Learning Curves}

Besides the $2$-pursuer case shown in the main paper, we have also verified the EPG framework in the $3$-pursuer no-exit scenario using the exact equilibrium oracle from the DP approach (Algorithm \ref{Algorithm}). The learning curves of pursuer policy are shown in Figure \ref{3v1}.

\begin{figure*}[h]
    \centering
    \includegraphics[width=0.57\textwidth]{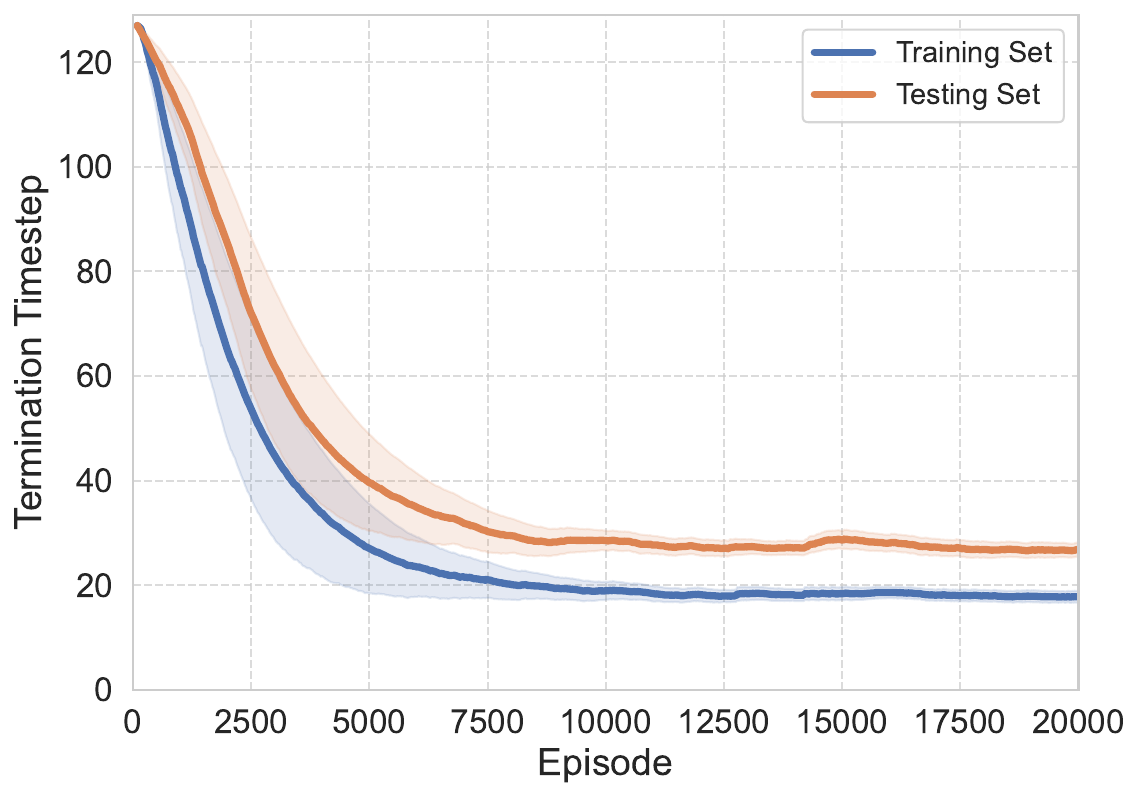}
    \caption{Pursuer learning curves in $3$-pursuer no-exit PEGs}
    \label{3v1}
\end{figure*}

Figure \ref{6v1} provides the learning curves of pursuer policy when trained with the grouping-extended DP approach in the $6$-pursuer scenario. Note that the difficulty of pursuit is not reduced when it comes to $6$ pursuers since a success pursuit requires $3$ pursuers simultaneously adjacent to the evader. Also note that our RL policy eventually demonstrates a better performance in the testing set than in the training set. This shows the strong zero-shot generalizability of our approach in the 6-pursuer scenario and also explains the strong performance of the RL pursuer in Table \ref{Table 2}.

\begin{figure*}[h]
    \centering
    \includegraphics[width=0.57\textwidth]{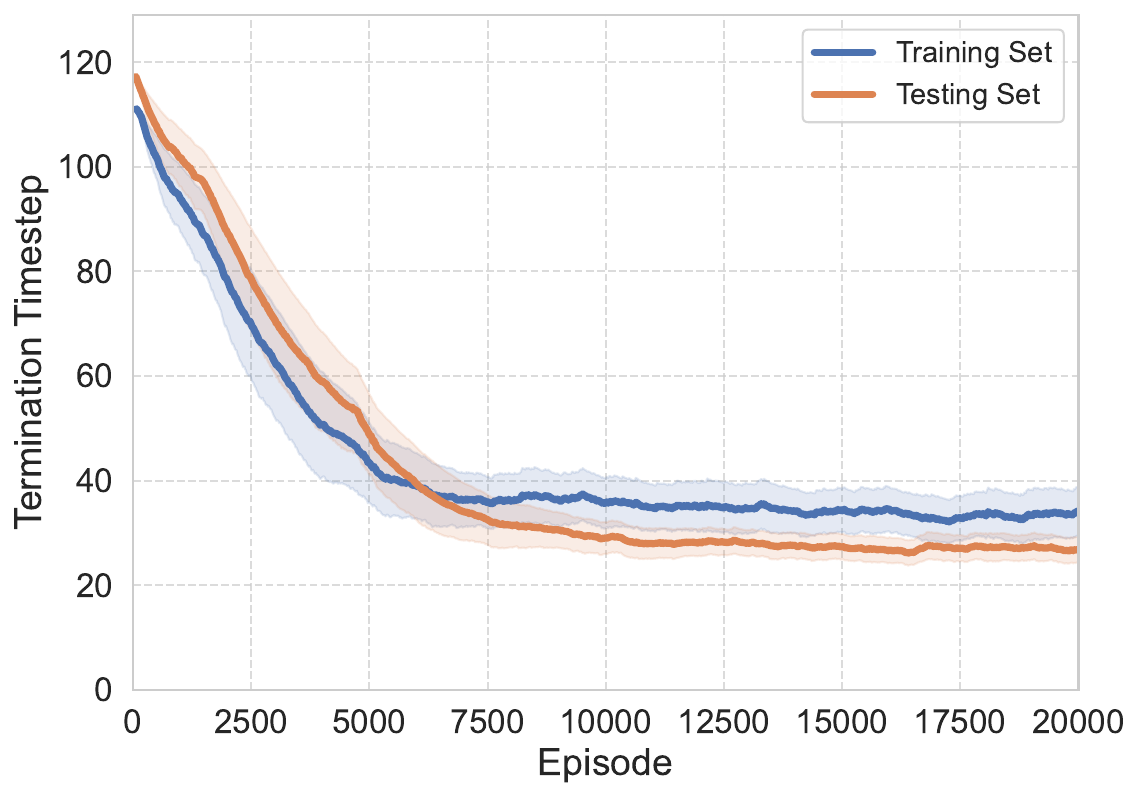}
    \caption{Pursuer learning curves in $6$-pursuer no-exit PEGs}
    \label{6v1}
\end{figure*}

Figure \ref{evader} shows the training process of the evaders in $2$-pursuer and $6$-pursuer scenarios, respectively. In contrast to pursuers, the evader aims to maximize the termination timesteps in no-exit scenarios.

\begin{figure*}[h]
    \centering
    \includegraphics[width=0.54\textwidth]{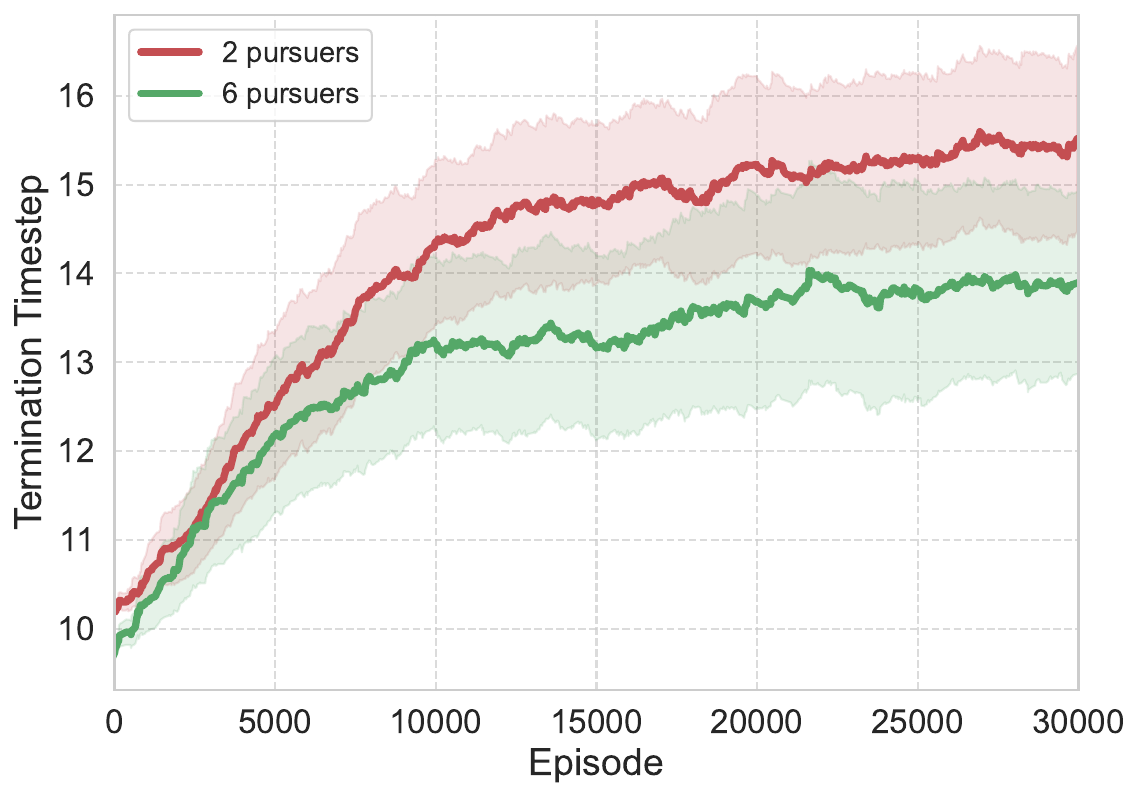}
    \caption{Evader learning curves in $2$-pursuer and $6$-pursuer scenarios}
    \label{evader}
\end{figure*}

Note that in our practical implementation, the pursuer side receives a positive reward of $+30$ when capturing the evader, which aligns with the hyperparameters in Table \ref{Parameter}. During training of the evader, a reward of $-30$ is in turn received upon pursuit, which corresponds to the adversarial game setting.

\subsection{Compute Resources and Time Costs}

\label{C.1}

For no-exit graphs, EPG requires preprocessing the Nash value (or, equivalently, the array $D$ in the DP algorithm) for each graph contained in the training set. Since our DP algorithm has a near-optimal time complexity, for a $2$-pursuer $500$-node graph, it only takes around $10$ seconds to run Algorithm \ref{Algorithm} without the last loop, using a single 12th Gen Intel Core i7-12700F CPU. For multi-exit graphs, the matching-based heuristic algorithm is directly executed online and has no preprocessing requirement.

For the reinforcement learning process, Table \ref{Time} provides our recorded time for running $1000$ EPG episodes, using two NVIDIA A100-SXM4-40GB GPUs. Intuitively, the training time grows sublinearly in the number of agents involved in the policy.

\begin{table*}[h]
\caption{Time requirement for $1000$ EPG episodes}
\begin{center}
\begin{tabular}{c|c|c}
Game Scenario & Training Object & Time \\\hline
2-pursuer no-exit PEG & 2-pursuer policy & $99$ min \\\hline
6-pursuer no-exit PEG & 6-pursuer policy & $180$ min \\\hline
2-pursuer no-exit PEG & evader policy & $75$ min \\\hline
6-pursuer no-exit PEG & evader policy & $75$ min \\\hline
5-pursuer 8-exit PEG \cite{li2024grasper} & 5-pursuer policy & $159$ min \\
\end{tabular}
\end{center}
\label{Time}
\end{table*}

\newpage

\section{Experimental Details}

\label{D}

\subsection{Further Tests in Real-World Graphs}

Figure \ref{graph} provides the illustrations of the real-world graphs used in Section \ref{4.1}.

\begin{figure*}[h]
    \centering
    \includegraphics[width=1.0\textwidth]{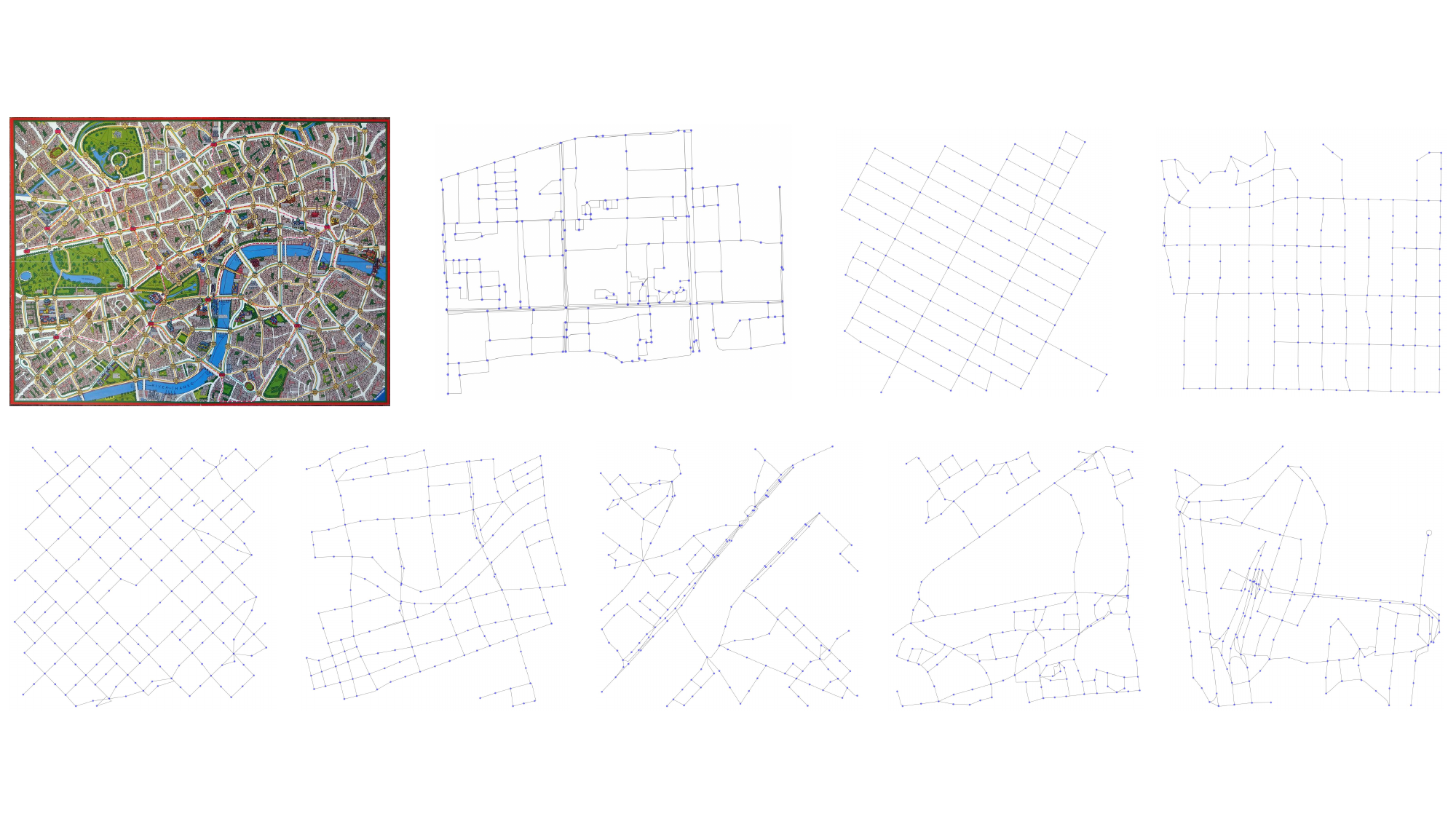}
    \caption{Illustrations of Scotland-Yard Map, Downtown Map, and $7$ famous real-world locations (following the order in Tables \ref{Table 1} and \ref{Table 2})}
    \label{graph}
\end{figure*}

\begin{table*}[h]
\caption{Performance of RL pursuer / evader against DP oracle in no-exit PEGs with 6 pursuers}
\begin{center}
\begin{tabular}{c|c|c|c|c|c}
\multirow{2}{*}{Graph Structure} & \multicolumn{3}{c|}{Pursuit Success Rate \text{↑}} & \multicolumn{2}{c}{Evasion Timestep \text{↑}}\\\cline{2-6}
& DP - DP & RL - DP & SPS - DP & DP - DP & DP - RL \\\hline\hline
Grid Map &                      $1.00$ & $1.00$ & $1.00$ & $7.73 \pm 2.75$  & $6.28 \pm 2.57$\\\hline
Scotland-Yard Map &             $1.00$ & $0.98$ & $0.00$ & $10.28 \pm 3.70$ & $7.60 \pm 2.77$\\\hline
Downtown Map &                  $1.00$ & $0.98$ & $0.01$ & $10.44 \pm 3.37$ & $7.77 \pm 2.83$\\\hline
Times Square &        $1.00$ & $0.99$ & $0.01$ & $18.06 \pm 6.67$ & $14.54 \pm 4.84$\\\hline
Hollywood Walk of Fame &    $1.00$ & $0.95$ & $0.00$ & $34.90 \pm 16.32$ & $24.15 \pm 10.20$\\\hline
Sagrada Familia &    $1.00$ & $0.96$ & $0.00$ & $27.11 \pm 11.77$ & $19.11 \pm 5.90$\\\hline
The Bund &            $1.00$ & $0.91$ & $0.02$ & $32.14 \pm 16.98$ & $22.01 \pm 7.95$\\\hline
Eiffel Tower &           $1.00$ & $0.99$ & $0.21$ & $26.15 \pm 9.29$ & $20.96 \pm 7.70$\\\hline
Big Ben &               $1.00$ & $0.96$ & $0.00$ & $32.61 \pm 14.09$ & $24.45 \pm 9.77$\\\hline
Sydney Opera House &    $1.00$ & $0.85$ & $0.00$ & $33.23 \pm 13.56$ & $24.60 \pm 10.88$\\
\end{tabular}
\end{center}
\label{Table 2}
\end{table*}

Table \ref{Table 2} presents the results of our further tests in the $6$-pursuer no-exit PEGs under the real-world graphs. As is shown in the table, the pursuit is still guaranteed by the grouped DP policy. This is reasonable because every sub-team of $2$ pursuers guarantees the eventual adjacency for half of the pursuers (i.e., one pursuer). Besides, our MARL pursuer exhibits a high rate of success against the DP-based evader under the grouping mechanism.

The shortest path strategy (SPS), however, has very low success rates against the DP-based evader in the real-world graphs. This reflects the strength of our DP policies even under a grouping extension that trades off the optimality. Besides, our RL evader maintains a stable performance against the DP pursuer, with an acceptable decline of evasion steps as in the $2$-pursuer case.

\subsection{Test Details}

As is stated in Section \ref{4}, our zero-shot testing graphs are differentiated from the $152$ Dungeon training graphs: Section \ref{4.1} (Table \ref{Table 1}) uses 10 unseen real-world graph structures. Section 4.2 (Figure \ref{ablation}) uses 10 unseen Dungeon graphs. Section 4.3 (Figure \ref{withexit}) uses the 10 unseen Dungeon graphs (left, zero-shot evaluations during training), Grid Map (mid), and Scotland-Yard Map (right).

For the tests in no-exit PEGs, the initial positions for the agents in the testing graphs are randomly generated under the restriction that the shortest path distance between the pursuers and the evader is greater than $5$. This restriction serves to rule out certain easy pursuits.

For multi-exit PEGs, we randomly generated $1000$ initial positions for the exits and agents, strictly following the descriptions and code from Grasper \cite{li2024grasper} for Grid Map and Scotland-Yard Map. For example, the minimum length of the evader’s shortest path to any exit is set to 6 for the Grid Map and 5 for the Scotland-Yard Map. While \cite{li2024grasper} claims that they rule out the trivial cases that are either too difficult or too simple, the specific restrictions on the initial conditions are related to their method and hard to replicate. Therefore, we simply rule out the cases where it is clearly impossible for the pursuers or the evader to win. For the evader, we require that its distance to the closest exit be no greater than $10$ (the time horizon of the game). For the pursuers, we require that, for any exit that the evader can reach in $10$ steps, there is at least one pursuer who is closer than or as close as the evader with respect to the shortest path distance towards the exit.

All of the tests for the zero-shot performance of our RL policies are conducted after $30000$ EPG episodes (each with at most $128$ transitions) within the training set that contains a total of $152$ graphs.

\subsection{Pursuit and Evasion Examples in No-Exit Graphs}

Figures \ref{N1} and \ref{N2} use a PEG example in the $2$-pursuer no-exit scenario to illustrate how our RL policy succeeds in capturing the DP evader while SPS is stuck in a loop (from step 9 to step 29) under the same initial condition. The graph is abstracted from a procedurally generated Dungeon map unseen during the training process. The circles represent the pursuers, and the square represents the evader.

\begin{figure*}[h]
    \centering
    \includegraphics[width=0.6\textwidth]{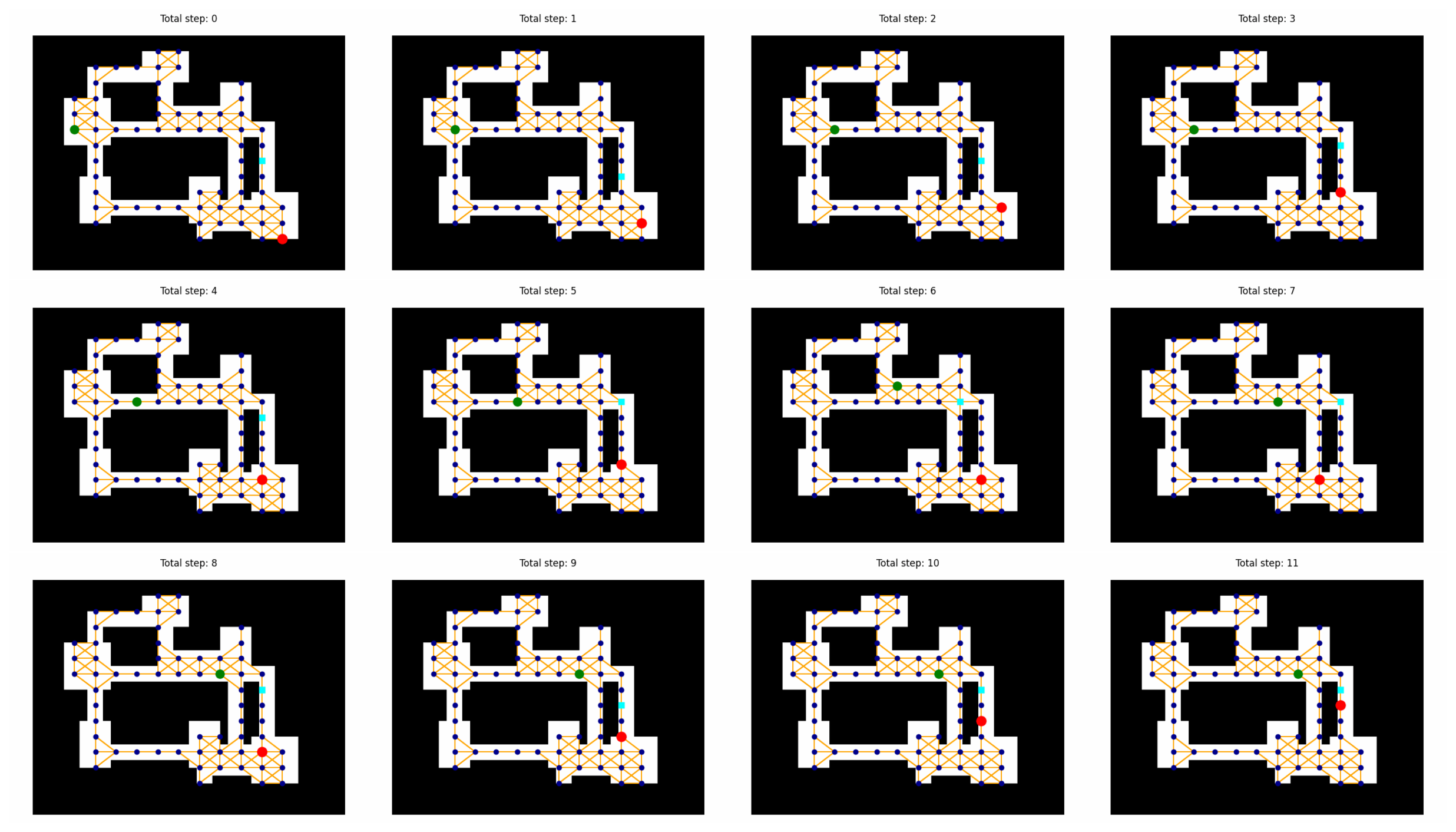}
    \caption{Successful pursuit against DP evader by RL pursuers}
    \label{N1}
\end{figure*}

\begin{figure*}[h]
    \centering
    \includegraphics[width=0.75\textwidth]{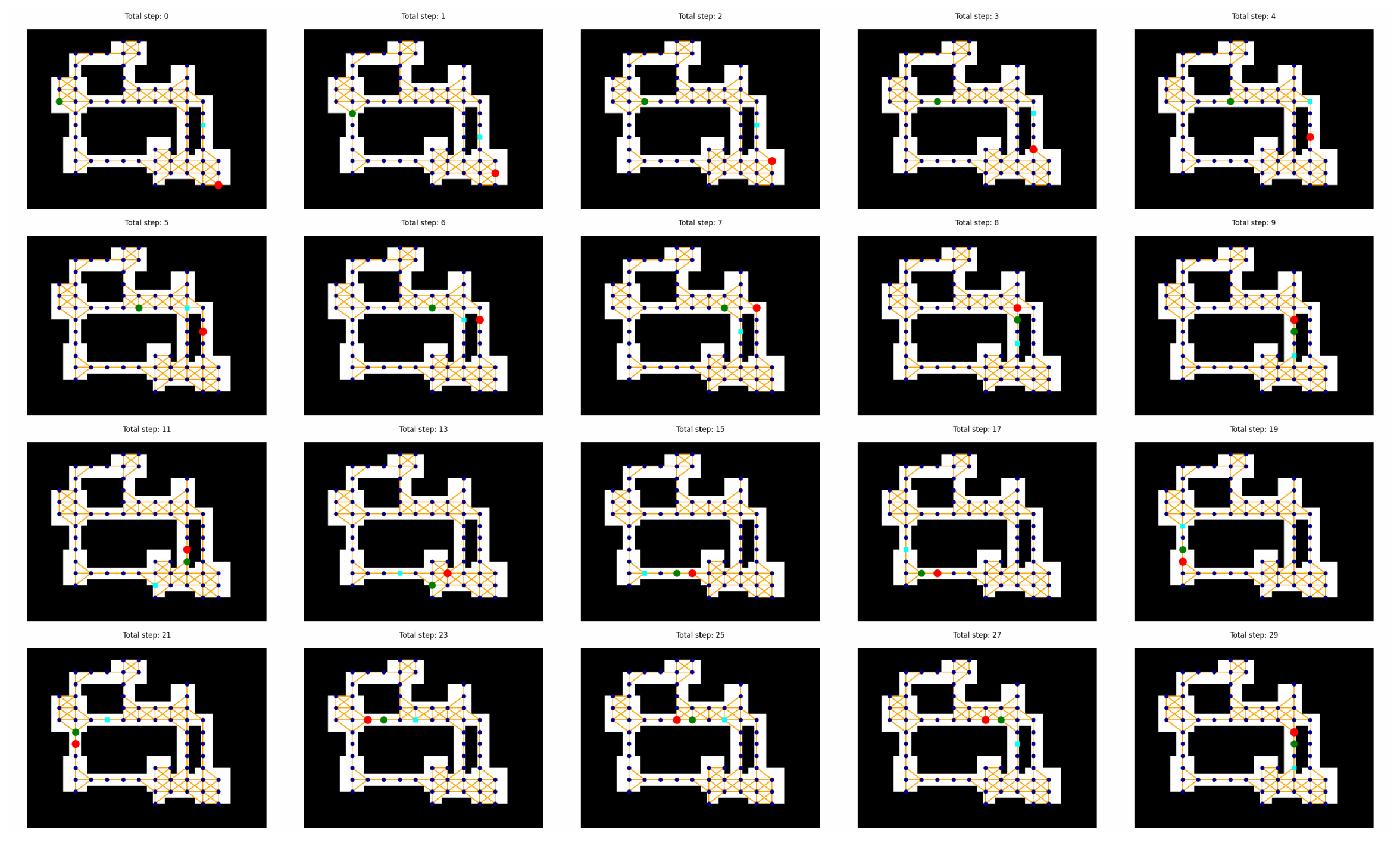}
    \caption{Failed pursuit against DP evader by shortest path strategy}
    \label{N2}
\end{figure*}

Figure \ref{N3} shows a pursuit-evasion example between $6$ grouped DP pursuers and the RL evader trained through EPG. The number of pursuers at the same node is explicitly shown on the circle when it is more than one. While surrounded by the pursuers, the RL evader seeks to maximize the timesteps before being captured by $3$ adjacent pursuers (i.e., half of the pursuers).

\begin{figure*}[h]
    \centering
    \includegraphics[width=1.0\textwidth]{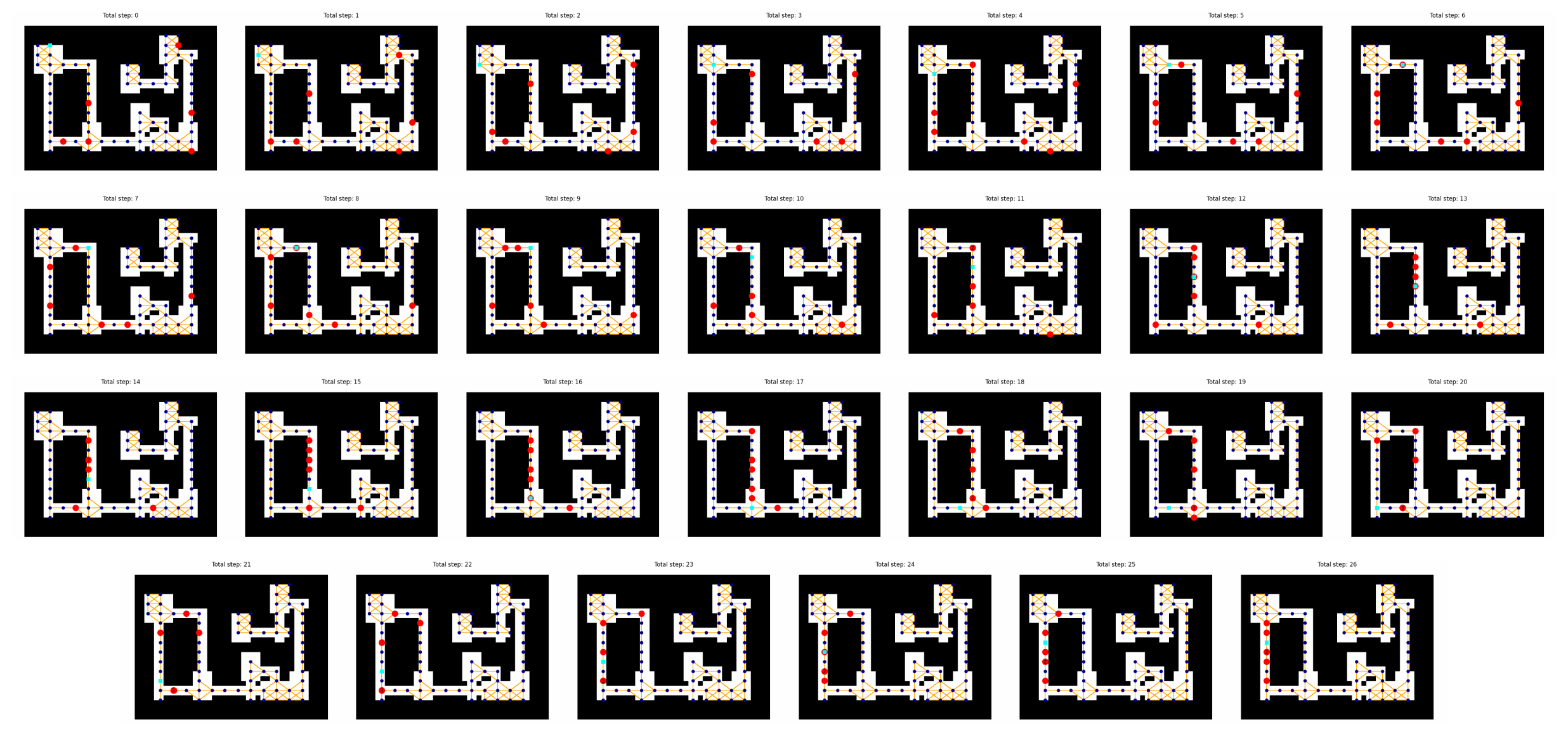}
    \caption{DP pursuers versus RL evader in 6-pursuer no-exit graph}
    \label{N3}
\end{figure*}

Figure \ref{N4} shows how the RL evader manages to evade SPS pursuers just like the DP evader. Although the evader policy does not train against SPS, it manages to find how to create a loop that can make similar pursuer strategies fail. The emergence of such behaviors could also reflect the generalization capability of our EPG framework.

\begin{figure*}[h]
    \centering
    \includegraphics[width=1.0\textwidth]{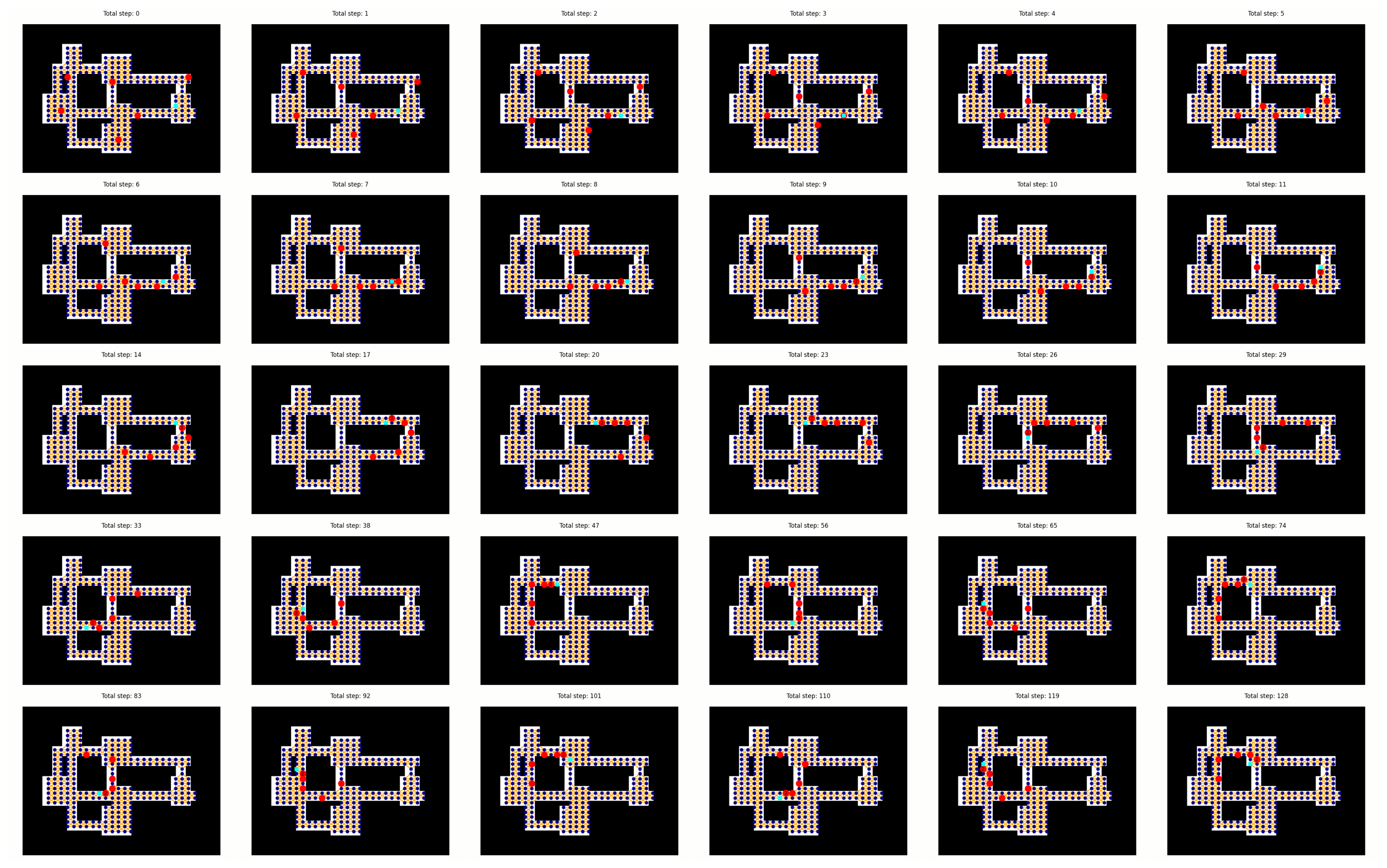}
    \caption{Shortest path strategy versus RL evader in 6-pursuer no-exit graph}
    \label{N4}
\end{figure*}

\subsection{Pursuit Examples in Multi-Exit Graphs}

While the time horizon for no-exit PEGs is $128$, it is only set to $10$ in multi-exit PEGs to align with \cite{li2024grasper}. Therefore, we show more pursuit examples in graphs with exits from the Dungeon environment. Figures \ref{E1}, \ref{E2}, and \ref{E3} provide three examples to show the zero-shot performance of our $5$-pursuer policy trained through EPG with equilibrium heuristic. The light stars represent the $8$ exits.

\begin{figure*}[h]
    \centering
    \includegraphics[width=0.9\textwidth]{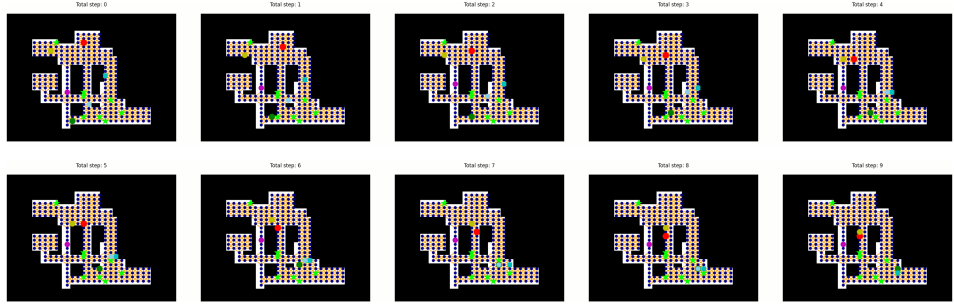}
    \caption{Pursuit example one in $5$-pursuer $8$-exit graph}
    \label{E1}
\end{figure*}

\begin{figure*}[h]
    \centering
    \includegraphics[width=1.0\textwidth]{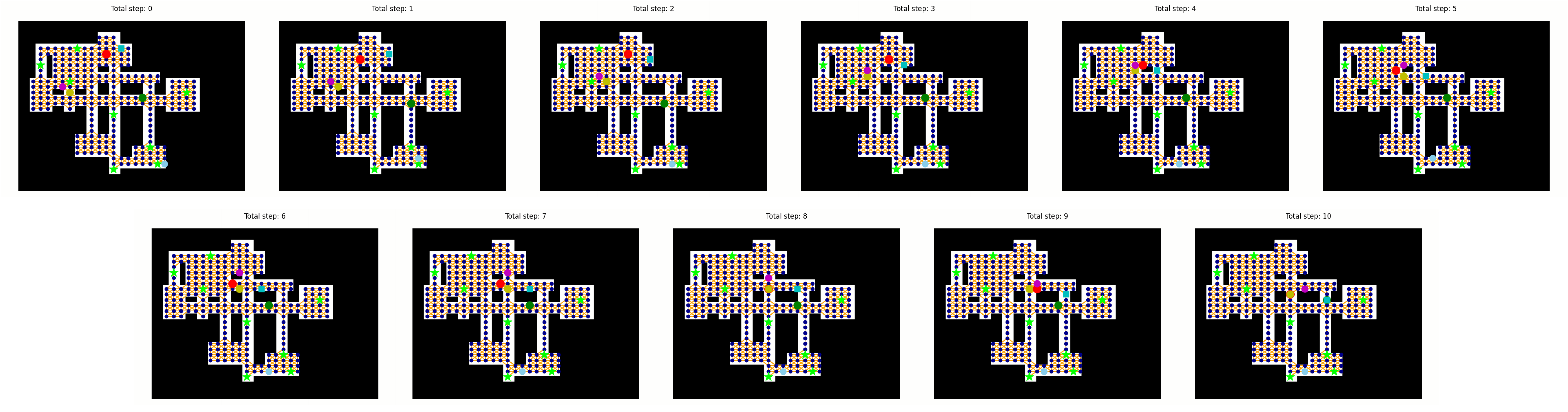}
    \caption{Pursuit example two in $5$-pursuer $8$-exit graph}
    \label{E2}
\end{figure*}

\begin{figure*}[h]
    \centering
    \includegraphics[width=1.0\textwidth]{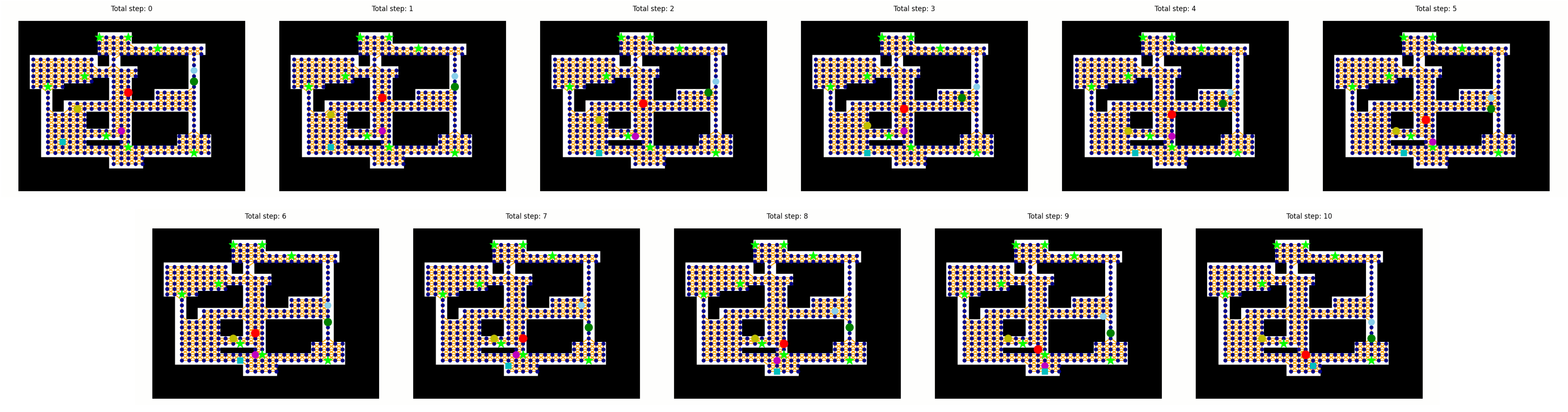}
    \caption{Pursuit example three in $5$-pursuer $8$-exit graph}
    \label{E3}
\end{figure*}

Intuitively, in the first example, the two pursuers on the right manage to block the two exits in time. In the second example, the four pursuers near the evader manage to surround it from all directions. In the third example, the yellow pursuer guards one nearby exit, while the red pursuer and the purple pursuer cooperate to guard the other exit close to the evader.

\newpage

\section{Can Zero-Shot RL Pursuers Outperform DP?}

\label{E}

While our previous results demonstrate the optimality of the DP policies, which maintain the pursuit success rate of $1$ whenever possible, the mixed-strategy RL pursuers trained through EPG can actually outperform the pure-strategy DP pursuers when the pursuit is not globally guaranteed. We use the no-exit PEG results under two other real-world graph structures to show this.

\subsection{Results in Scale-Free Graph}

\begin{figure*}[h]
    \centering
    \includegraphics[width=0.4\textwidth]{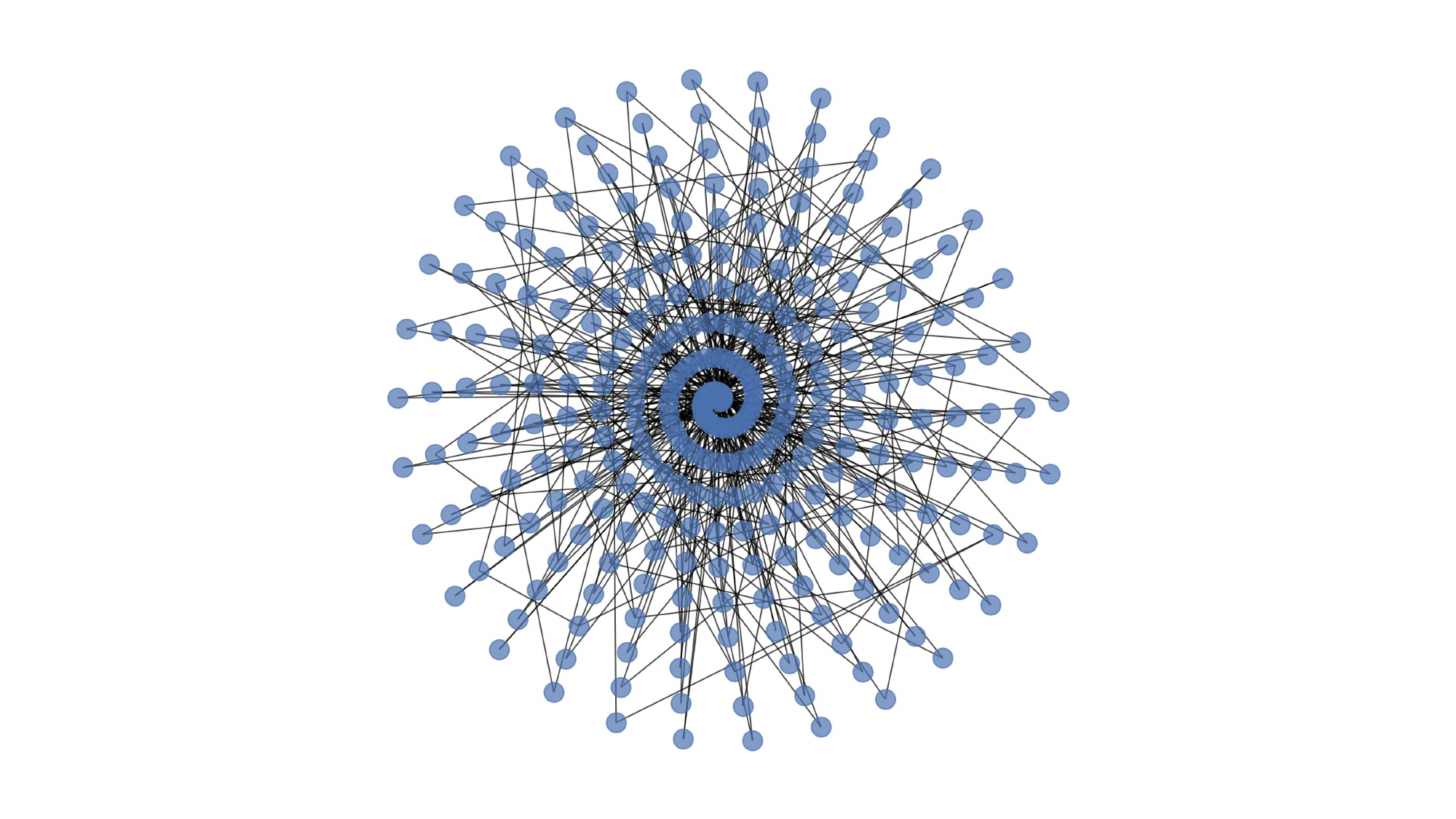}
    \caption{Illustration of Scale-Free Graph}
    \label{Scale-Free}
\end{figure*}

Figure \ref{Scale-Free} provides an illustration of Scale-Free Graph, which is generated using Barabási-Albert preferential attachment \citep{barabasi1999emergence}. With a maximum degree of $21$ and a diameter of $11$, Scale-Free Graph has very strong connectivity that can make evasion easier. As the pursuit is not guaranteed for $2$ pursuers in this graph, we find that the DP pursuer policy only has a success rate of $0.1$ against the DP evader. With a success rate of $0.44$, our RL policy clearly outperforms DP under this graph.

\subsection{Results under Singapore Graph and Success Condition Shift}

\begin{figure*}[h]
    \centering
    \includegraphics[width=0.75\textwidth]{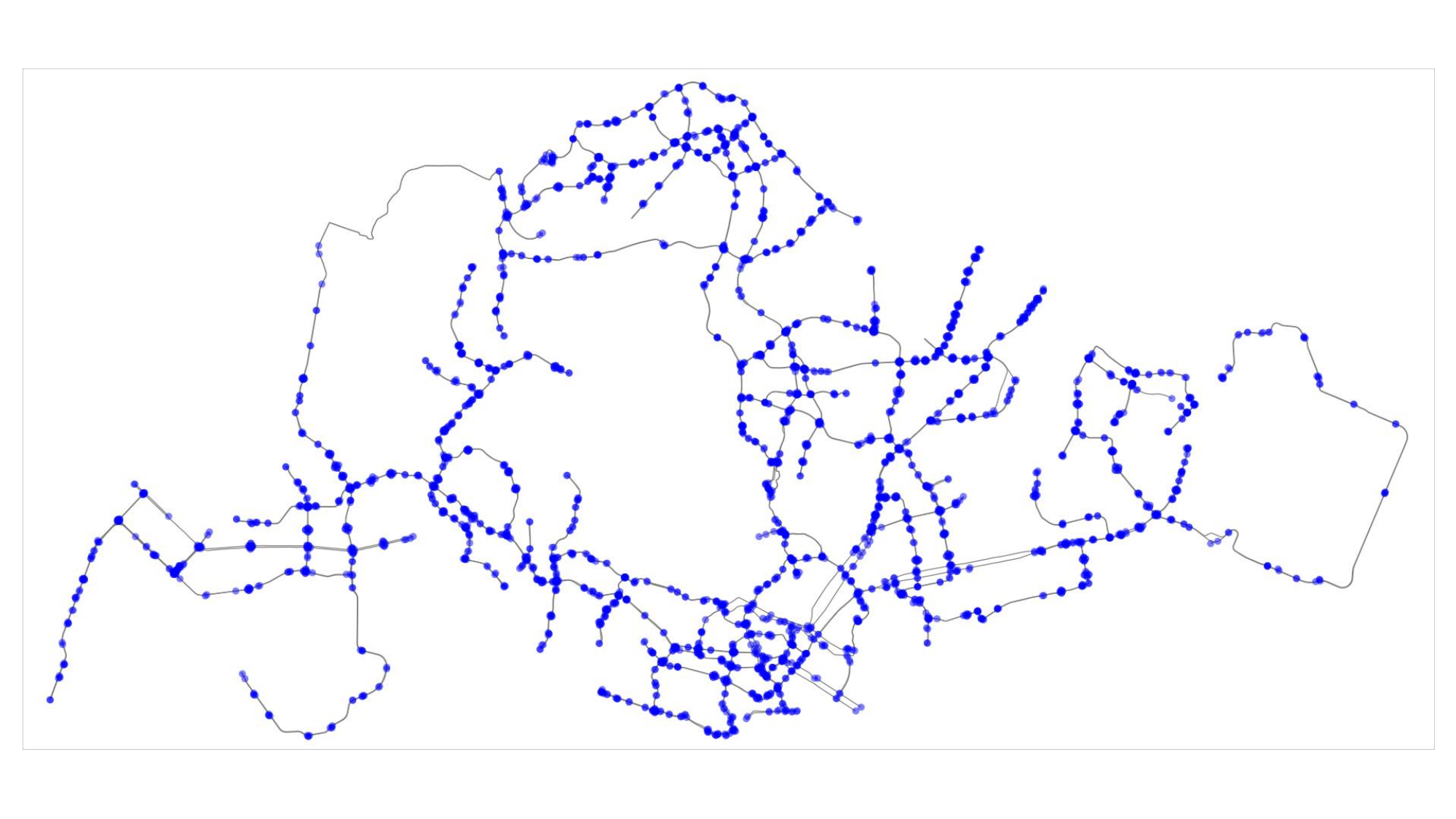}
    \caption{Illustration of Singapore Graph}
    \label{Singapore}
\end{figure*}

Figure \ref{Singapore} provides an illustration of Singapore Graph, which is a high-level abstraction for the map of Singapore. With a maximum degree of $13$ and a diameter of $41$, the connectivity of the Singapore Graph is not as good as that of the Scale-Free Graph. While the pursuit is guaranteed for $2$ pursuers under the success condition that one pursuer is adjacent to the evader, it is no longer guaranteed when we strengthen the condition to be both pursuers. In this case, the success rate of DP drops from $1$ to $0.04$, while the success rate of RL drops from $0.78$ to $0.49$. This comparative result suggests that the RL policy trained through our EPG framework could be robust against certain shifts of game rules.

Actually, we have further verified such robustness of our RL policy in the testing graphs that we used in the main paper. Table \ref{Table 3} compares the percentages of successful pursuit maintained by DP and RL pursuers after strengthening the success condition under the $10$ graph structures. Compared with the DP pursuer, the RL pursuer demonstrates a clear advantage of robustness, with the maintenance rates over $60\%$ (under $1000$ tests for each graph) in $9$ out of the $10$ graphs.

\begin{table*}[h]
\caption{Percentages of successful pursuit maintained after success condition shift}
\begin{center}
\begin{tabular}{c|c|c}
Graph Structure & DP pursuer & RL pursuer\\\hline\hline
Grid Map &                      $28.9\%$ & $90.6\%$\\\hline
Scotland-Yard Map &             $27.9\%$ & $90.4\%$\\\hline
Downtown Map &                  $17.0\%$ & $84.6\%$\\\hline
Times Square, New York &        $20.7\%$ & $68.4\%$\\\hline
Hollywood Walk of Fame, LA &    $4.7\%$ & $93.6\%$\\\hline
Sagrada Familia, Barcelona &    $1.7\%$ & $78.1\%$\\\hline
The Bund, Shanghai &            $2.0\%$ & $65.9\%$\\\hline
Eiffel Tower, Paris &           $9.1\%$ & $40.3\%$\\\hline
Big Ben, London &               $6.3\%$ & $66.1\%$\\\hline
Sydney Opera House, Sydney &    $8.6\%$ & $79.5\%$\\
\end{tabular}
\end{center}
\label{Table 3}
\end{table*}

\section{Additional Evaluations}

\label{F}

\subsection{Performance under Large-Scale Graphs}

\label{F.1}

Here we conduct additional experiments to further show that our trained policy can directly generalize to large-scale graphs. We increase both map range and discretization accuracy to derive significantly larger graphs for the eight real-world locations in Table \ref{Table 1}. Specifically, the larger graphs are generated by doubling the discretization accuracy and increasing the map range. For Sydney Opera House, we actually double the map range to create a graph with at least $1000$ nodes.

\begin{table*}[h]
\caption{Performance comparison across different maps at original and large scales}
\begin{center}
\begin{tabular}{c|c|c|c|c}
\multirow{2}{*}{Location} & \multicolumn{2}{c|}{Original Scale} & \multicolumn{2}{c}{Large Scale}\\\cline{2-5}
& Node Number & Success Rate & Node Number & Success Rate\\\hline\hline
Downtown Map & 206 & 0.99 & 907 & 0.99\\\hline
Times Square & 171 & 0.98 & 768 & 0.89\\\hline
Hollywood Walk of Fame & 201 & 0.62 & 892 & 0.71\\\hline
Sagrada Familia & 231 & 0.66 & 899 & 0.57\\\hline
The Bund & 200 & 0.60 & 952 & 0.54\\\hline
Eiffel Tower & 202 & 0.97 & 616 & 0.82\\\hline
Big Ben & 192 & 0.91 & 675 & 0.78\\\hline
Sydney Opera House & 183 & 0.74 & 1074 & 0.83\\
\end{tabular}
\end{center}
\label{Table 5}
\end{table*}

The success rates of two RL pursuers against the DP evader under the original graphs and the large graphs are compared in Table \ref{Table 5}. While the node number increases significantly, the pursuit success rates do not suffer from a significant decline under large-scale graphs. In scenarios like Hollywood Walk of Fame and Sydney Opera House, they are even higher because of the structural changes.

Note that we do not fine-tune our policy on the larger graphs. We only adjust the termination function to keep the difficulty of the real-world pursuit task under the new graphs close to the original one. Specifically, one pursuer succeeds when its distance to the evader is no greater than $1$ on the original graph and when the distance is no greater than $2$ on the larger graph. Since the discretization accuracy doubles, the two conditions correspond to roughly the same real-world pursuit condition despite some local structural changes. Under the new graphs with the new success condition, we rerun the DP algorithm (Algorithm \ref{Algorithm}) to generate the optimized policies correspondingly.

\subsection{Performance against PSRO Policies}

\label{F.2}

Besides DP, we also test the results of our EPG pursuer and evader policies against the PSRO \cite{lanctot2017unified} policies directly trained on the testing graphs. As is shown in Table \ref{Table 6}, our EPG policies guarantee to capture the PSRO evader and survive significantly longer against the PSRO pursuers (except for Grid Map). This further verifies the benefit of using EPG for cross-graph zero-shot generalization.

\begin{table*}[h]
\caption{Performance comparison between EPG and PSRO policies}
\begin{center}
\begin{tabular}{c|c|c}
\multirow{2}{*}{Graph Structure} & \multicolumn{2}{c}{Termination Timestep [Success Rate]}\\\cline{2-3}
& \textbf{$\text{EPG}_p$} - \textbf{$\text{PSRO}_e$} & \textbf{$\text{PSRO}_p$} - \textbf{$\text{EPG}_e$}\\\hline\hline
Grid Map & $14.04 \pm 6.88$ $[1.00]$ & $11.87 \pm 2.39$ $[1.00]$\\\hline
Scotland-Yard Map & $17.45 \pm 9.17$ $[1.00]$ & $25.48 \pm 16.80$ $[1.00]$\\\hline
Downtown Map & $14.14 \pm 7.37$ $[1.00]$ & $38.39 \pm 30.50$ $[0.97]$\\\hline
Times Square, New York & $17.72 \pm 7.86$ $[1.00]$ & $37.87 \pm 29.94$ $[0.97]$\\\hline
Hollywood Walk of Fame, LA & $30.93 \pm 17.67$ $[1.00]$ & $60.71 \pm 39.21$ $[0.86]$\\\hline
Sagrada Familia, Barcelona & $26.06 \pm 12.43$ $[1.00]$ & $39.50 \pm 28.25$ $[0.97]$\\\hline
The Bund, Shanghai & $24.76 \pm 13.01$ $[1.00]$ & $34.97 \pm 24.75$ $[0.99]$\\\hline
Eiffel Tower, Paris & $19.10 \pm 10.69$ $[1.00]$ & $25.24 \pm 14.07$ $[1.00]$\\\hline
Big Ben, London & $23.70 \pm 13.00$ $[1.00]$ & $53.67 \pm 36.41$ $[0.90]$\\\hline
Sydney Opera House, Sydney & $26.14 \pm 12.81$ $[1.00]$ & $67.23 \pm 43.37$ $[0.77]$\\
\end{tabular}
\end{center}
\label{Table 6}
\end{table*}

\section{Broader Impacts}

\label{G}

As this work presents a general framework that leads to robust RL pursuer or evader policies with desirable zero-shot performance across different real-world graphs, it can be directly applied to real-world PEG solving under varying graph structures. The EPG framework in principle allows for an arbitrary amount of pursuers, exits, and graph nodes and thus guarantees applicability to diverse PEG settings in the real world. Since the inference time is negligible under GPUs, the trained policy model guarantees real-time applicability even when the graph structure dynamically changes, as we only need to recompute the shortest path distances as inputs.

From a security perspective, through our approach, a pursuit policy under urban streets can be readily generated for any traffic conditions that may lead to quite different graph connectivity. When trained under a more diverse set of graphs and a larger scale of parameters, a "large" policy model for general security purposes can be obtained. Such a model can save a huge amount of computation for solving different real-world PEGs. For example, while it can take hours to compute equilibrium policies in $500$-node no-exit graphs with $3$ pursuers through Algorithm \ref{Algorithm}, this time consumption is only required during preprocessing and training through EPG rather than during execution.

Also, our trained models can efficiently generate reasonable pursuit and evasion examples that may relate to real-world pursuit and evasion behaviors. This could help people better understand and predict the motivations of both the pursuers and the evader. Besides, due to its generality, the EPG training pipeline can also be extended for robust policy learning in other adversarial game domains. The idea of constructing equilibrium adversaries through preprocessed or preconstructed (accurate or approximate) oracles could have further impacts on subsequent reinforcement learning research.

\end{document}